\theoremstyle{plain}
\newtheorem{theorem}{Theorem}[section]
\newtheorem{proposition}[theorem]{Proposition}
\theoremstyle{definition}
\newtheorem{definition}[theorem]{Definition}
\theoremstyle{remark}
\icmltitlerunning{Purity Law for Generalizable Neural TSP Solvers}
\begin{document}

\twocolumn[
\icmltitle{Purity Law for Generalizable Neural TSP Solvers}



\icmlsetsymbol{equal}{*}

\begin{icmlauthorlist}
\icmlauthor{Wenzhao Liu}{sch}
\icmlauthor{Haoran Li}{sch}
\icmlauthor{Congying Han}{sch}
\icmlauthor{Zicheng Zhang}{sch}
\icmlauthor{Anqi Li}{sch}
\icmlauthor{Tiande Guo}{sch}
\end{icmlauthorlist}

\icmlaffiliation{sch}{School of Mathematical Sciences, University of Chinese Academy of Sciences, Beijing, China}

\icmlcorrespondingauthor{Congying Han}{ hancy@ucas.ac.cn}

\icmlkeywords{Machine Learning, ICML}

\vskip 0.3in
]



\printAffiliationsAndNotice{}  

\begin{abstract}
Achieving generalization in neural approaches across different scales and distributions remains a significant challenge for the Traveling Salesman Problem~(TSP).
A key obstacle is that neural networks often fail to learn robust principles for identifying universal patterns and deriving optimal solutions from diverse instances.
In this paper, we first uncover Purity Law (PuLa), a fundamental structural principle for optimal TSP solutions, defining that edge prevalence grows exponentially with the sparsity of surrounding vertices. 
Statistically validated across diverse instances, PuLa reveals a consistent bias toward local sparsity in global optima.
Building on this insight, we propose Purity Policy Optimization~(PUPO), a novel training paradigm that explicitly aligns characteristics of neural solutions with PuLa during the solution construction process to enhance generalization.
Extensive experiments demonstrate that PUPO can be seamlessly integrated with popular neural solvers, significantly enhancing their generalization performance without incurring additional computational overhead during inference.



\end{abstract}

\section{Introduction}

The Traveling Salesman Problem~(TSP) is a fundamental combinatorial optimization~(CO) problem with broad applications in operations research and computer science fields such as scheduling~\cite{picard1978time,bagchi2006review}, circuit compilation~\cite{Paler_2021,duman2004precedence}, and computational biology~\cite{796584,gusfield2019integer}.
Due to its NP-hard nature, even the most advanced exact solver~\cite{APPLEGATE200911} could not efficiently find the optimal solution for large-scale instances within a reasonable time frame.
As a result, approximate heuristic algorithms, such as LKH3~\cite{helsgaun2000effective, helsgaun2017extension},
have been developed to find near-optimal solutions for large instances with improved efficiency.
However, these approaches still face significant computational overhead due to their iterative search processes for each instance.

\begin{figure}[t]
\begin{center}
\centerline{\includegraphics[scale=0.37]{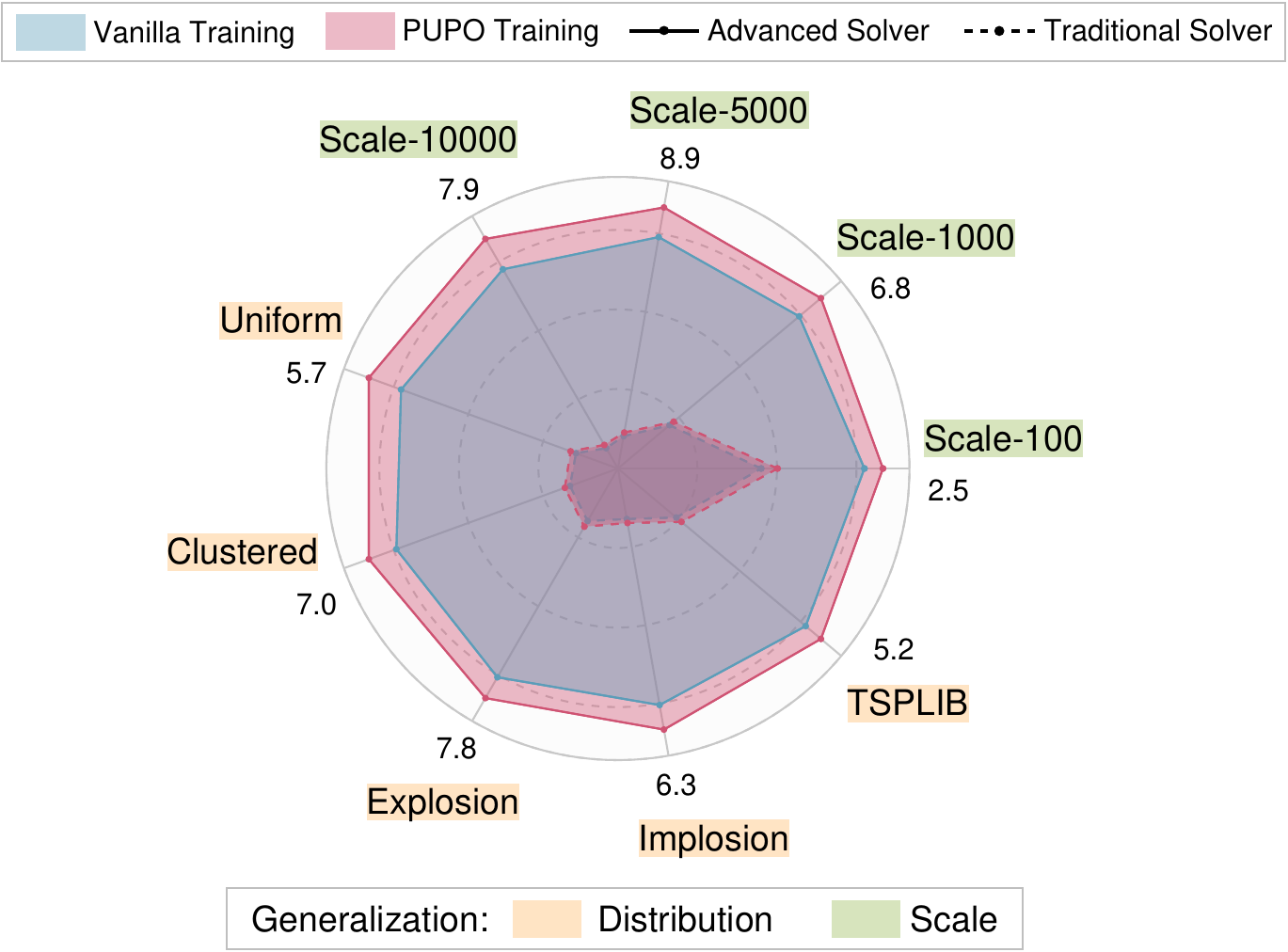}}
\vspace{0em}
\caption{Generalization performance of the neural TSP solver with PUPO is significantly improved across various instance scales and distributions compared to the vanilla solver without PUPO.}
\label{radar}
\end{center}
\vspace{-3em}
\end{figure}

Deep learning, particularly deep reinforcement learning (DRL), holds significant promise for developing fast and advanced TSP heuristics.
Learning-based approaches generally fall into two categories: methods that learn to construct solutions step by step~\cite{bello2016neural,kool2018attention,kwon2020pomo,jin2023pointerformer}, and methods that learn to search for better solutions iteratively~\cite{lu2019learning, article,fu2021generalize}. Among these, 
neural constructive methods excel in faster inference and higher performance, enabling real-time TSP applications. 
However, they often struggle with generalization and exhibit limited performance on large-scale or heterogeneous instances.

The primary challenge behind this weak generalization lies in the tendency of neural models to overfit to specific patterns tied to particular training settings. 
To address this, some approaches have focused on simplifying the decision space, such as restricting the feasible action set~\cite{pmlr-v235-fang24c} and employing divide-and-conquer strategies~\cite{Pan_Jin_Ding_Feng_Zhao_Song_Bian_2023,NEURIPS2023_1c10d0c0}.
While these methods improve performance, they inherently reduce the potential optimality of the solutions. 
More importantly, they lack guidance from the universal structural properties of TSP, which prevents neural solvers from directly learning consistent and applicable patterns across various instances.

In this paper, we first explore generalizable structural patterns in TSP, then leverage them to guide the learning process of neural solvers. 
To begin with, 
we define edge purity order as a measure of vertex density around edges, where sparser neighborhoods yield lower purity orders.
We reveal Purity Law~(PuLa), a fundamental principle for TSP  stating: \textit{the proportion of different edges in the optimal solution follows a negative exponential law based on their purity orders across various instances.}
This indicates that edges with lower purity orders are more prevalent in optimal solutions, with their frequency increasing exponentially as the order decreases.
PuLa demonstrates its potential to capture universal structural information in two key ways.
First, it is ubiquitously present in optimal solutions in various instances, underscoring \textit{its strong connection to optimality}.
Second, the parameters of its negative exponential model remain statistically invariant across varying instance scales and distributions, proving that \textit{the sparsity-driven edge dominance law is intrinsic rather than data-specific}. 

Based on the universal presence of the Purity Law, we propose \textbf{PU}rity \textbf{P}olicy \textbf{O}ptimization~(PUPO), a novel training framework that incorporates this generalizable structural information into the policy optimization process.
PUPO guides the solution construction process by encouraging the emergence of PuLa. 
Specifically, it modifies the policy gradient to balance the purity metrics of solutions at different decision stages.
By aligning with PuLa, PUPO helps models learn consistent patterns independent of specific instances, thereby improving their ability to generalize to different distributions and scales, without altering the underlying network architecture.
Notably, PUPO can be easily integrated with a wide range of existing neural solvers.
Extensive experiments show that PUPO significantly enhances the generalization ability of these solvers~(shown in Fig.~\ref{radar}), without increasing computational overhead during inference~(shown in Fig.~\ref{time_bar}). 
Our contributions include:
\begin{itemize}[leftmargin=0em, itemindent=1em,topsep=0em]
\setlength{\itemsep}{1pt}
\setlength{\parsep}{1pt}
\setlength{\parskip}{1pt}
    \item We identify the Purity Law~(PuLa) as a fundamental principle that reliably characterizes optimal solutions across various scales and distributions, offering a novel perspective for comprehending common structural patterns in TSP.
    \item We propose Purity Policy Optimization~(PUPO), a novel training approach that explicitly encourages alignment with PuLa during the solution construction process, helping models to learn consistent patterns across different instances.
    \item We demonstrate through extensive experiments that PUPO can be seamlessly integrated with popular neural solvers to considerably improve their generalization, without increasing computational cost during inference.
\end{itemize}

\section{Related Work}

\textbf{Traditional Methods.} 
TSP solvers are typically categorized into exact, approximate, and heuristic. 
Concorde~\cite{APPLEGATE200911}, a leading exact solver, formulates TSP as a mixed integer program and uses a branch and cut algorithm~\cite{padberg1991branch} to solve it.
Christofides Algorithm~\cite{Christofides1976WorstCaseAO}, a representative approximate method, constructs solutions starting from a minimum spanning tree and finding a minimum cost perfect matching in a specific induced subgraph, achieving an approximation ratio of 1.5.
LKH-3~\cite{helsgaun2017extension} is the SOTA heuristic method, employing local search and k-opt exchanges to iteratively improve solutions. 
However, achieving high-precision solutions with LKH-3 incurs significant computational costs, making it impractical for large-scale instances.

\textbf{Classical Neural Solvers.} 
With the rapid progress in deep learning, various neural approaches to combinatorial optimization have emerged.
Overviews of these methods can be found in \citet{guo2019solving} and \citet{bengio2021machine}.
These approaches can generally be divided into two classes: learning to directly construct and learning to iteratively improve solutions.
For constructive methods, \citet{vinyals2015pointer} introduces the Pointer Network, which solves TSP end-to-end utilizing Recurrent Neural Networks to encode vertex embeddings, trained via supervised learning.
Transformer-based architectures in DRL further improve the encoding of node features globally and employ autoregressive decoding to generate solutions~\cite{kool2018attention,kwon2020pomo,bresson2021transformer}.
Pointerformer~\cite{jin2023pointerformer} enhances memory efficiency by adopting a reversible residual network in the encoder and a multi-pointer network in the decoder.
For iterative methods, DRL-2opt~\cite{article} trains a DRL policy to select appropriate 2-opt operators for refining solutions.
GCN+MCTS~\cite{fu2021generalize} integrates graph decomposition and Monte Carlo Tree Search to handle large-scale instances effectively.

\textbf{Advanced Solvers.} 
Traditional Neural methods often struggle with poor generalization~\cite{joshi2021learning}.
UTSP~\cite{min2024unsupervised} applies unsupervised learning by leveraging permutation matrix properties but fails to scale to instances larger than 2000 vertices.
H-TSP~\cite{Pan_Jin_Ding_Feng_Zhao_Song_Bian_2023} improves efficiency for large-scale instances by employing hierarchical DRL. 
LEHD~\cite{NEURIPS2023_1c10d0c0} and GLOP~\cite{Ye_Wang_Liang_Cao_Li_Li_2024} continuously refine solutions through divide-and-conquer strategies. 
PDAM~\cite{Zhang_Xiao_Wang_Song_Chen_2023} and  INViT~\cite{pmlr-v235-fang24c} simplify the decision space by restricting actions to local neighborhoods, which improves generalization. 
However, they inherently trade off the optimality of solutions in favor of generalization.
Despite progress, no research has explored universal structural principles of TSP to guide neural learning, leaving the challenge of identifying consistent instance patterns unsolved.

\section{Preliminary}

In this section, we introduce the formal definition of TSP and the policy learning paradigm for TSP solutions.

\subsection{Traveling Salesman Problem}
In this paper, we focus on the Euclidean TSP in two-dimensional space. 
Given a TSP instance, let $G = (\mathcal{X}, \mathcal{E}, N)$ represents an undirected graph, where $N$ is the number of vertices, $\mathcal{X} = \{x_i|1\leq i\leq N \}$ is the vertex set, and $\mathcal{E} = \{e_{ij} = (x_i, x_j)|1\leq i, j\leq N\}$
is the edge set. 
In the Euclidean TSP, $G$ is fully connected and symmetric, with each vertex $x_i \in \mathcal{X}$ represented by a coordinate $(x_i^1, x_i^2)$ scaled to the unit square $[0, 1]$. 
Each edge $e_{ij} \in \mathcal{E}$ is assigned a traversal cost $c(x_i, x_j)$, typically the Euclidean distance between the vertices $x_i$ and $x_j$. 
A Hamiltonian cycle $\boldsymbol{x} = ( x_{\tau_1}, x_{\tau_2}, \cdots, x_{\tau_N} )$ is a tour that visits each vertex exactly once. The total cost of $\omega$ is formulated as follows:
\begin{equation}
L(\boldsymbol{x}) =  c(x_{\tau_N}, x_{\tau_1}) + \sum_{i=2}^{N}c(x_{\tau_{i-1}}, x_{\tau_{i}}), 
\end{equation}
The objective of solving TSP is to find the Hamiltonian cycle with the minimum total cost among all feasible cycles.

\subsection{Policy Learning for the TSP}

Policy learning is the dominant paradigm in DRL for solving the TSP. Given an instance $\mathcal{X}$, a solution $\boldsymbol{\tau} = (\tau_1, \dots, \tau_N)$ is autoregressively generated through the neural policy:
\begin{equation}
    p_{\theta}(\boldsymbol{\tau} | \mathcal{X}) = p_{\theta}(\tau_1 | \mathcal{X}) \prod \limits_{t=2}^N p_{\theta}(\tau_{t} | \tau_{1:t-1}, \mathcal{X}), 
\end{equation}
where $\theta$ are the parameters of the policy network, trained by minimizing the expected tour length: 
\begin{equation}
    \mathcal{L} (\theta | \mathcal{X}) = \mathbb{E}_{p_{\theta}(\boldsymbol{\tau} | \mathcal{X})} [L(\boldsymbol{\tau}) ]. 
\end{equation}
REINFORCE~\cite{williams1992simple} is the cornerstone of policy-based reinforcement learning methods, which computes the gradient $\nabla \mathcal{L} (\theta | \mathcal{X})$ in the context of TSP as the following:
\begin{equation} \label{eq: reinforce tsp}
\scalebox{0.95}{$
    \mathbb{E}_{p_{\theta}(\boldsymbol{\tau} | \mathcal{X})} \left[\left( L(\boldsymbol{\tau}) - b(\mathcal{X}) \right) \sum \limits_{t=2}^{N} \nabla  \log p_{\theta}(\tau_{t} | \tau_{1:t-1}, \mathcal{X})  \right],
    $}
\end{equation}
where $b(\mathcal{X})$ is a baseline for variance reduction.
    


\section{Purity Patterns for Generalization}
In this section, we first explore the structural patterns closely tied to generalization in optimal solutions. 
We observe that edges surrounded by sparse vertices consistently prevail across different instances. 
To capture this relationship, we propose the concept of \textit{purity order}, which quantifies the density of vertices around an edge.
Based on this, we uncover the Purity Law, an empirical phenomenon that reveals a universal structural principle in TSP optimal solutions.

\subsection{Purity Order in Traveling Salesman Problems}

We introduce the concept of purity order to formulate the vertex density around each edge.

\begin{definition}[Purity Order]
The \textit{covering set} $N_{c}$ and \textit{purity order} $K_p$ of an edge $e_{ij} = (x_i, x_j)$ are defined as:
\begin{subequations}
\begin{align}
    N_{c}(e_{ij}) &:= \{x \in \mathcal{X} \mid (x_i-x)^T \cdot (x_j-x)<0 \}, \label{eq:a} \\
    K_p(e_{ij}) &= K_p(x_i, x_j) := |N_{c}(e_{ij})|. \label{eq:b}        
\end{align}
\end{subequations}
\end{definition}
Geometrically, the covering set of an edge corresponds to the vertices lying within a circle, whose diameter is defined by this edge, and the purity order is the cardinality of this set. 
As illustrated in Fig.~\ref{same length diff order}, the purity order reflects the local density of vertices surrounding an edge.
A lower purity order indicates a more sparse and “pure” configuration of vertices around the edge. 
For convenience, we refer to an edge with purity order $k$ as a $k$-order pure edge. 

Additionally, we investigate the topological properties of $0$-order pure edges with the lowest redundancy (details shown in Appendix~\ref{proof topo}).
Specifically, we establish the \textit{existence} of $0$-order pure neighbors for any vertex, laying the groundwork for our optimization paradigm.
We then prove the \textit{connectivity} of the subgraph formed by $0$-order pure edges, indicating that these edges capture global structural properties.
Furthermore, we demonstrate that the polyhedron formed by the $0$-order pure neighbors of any vertex is \textit{convex}, highlighting its structural integrity and stability. This insight suggests the potential for efficient computational methods. 

\begin{figure}[t]
\begin{center}
\centerline{\includegraphics[scale=0.4]{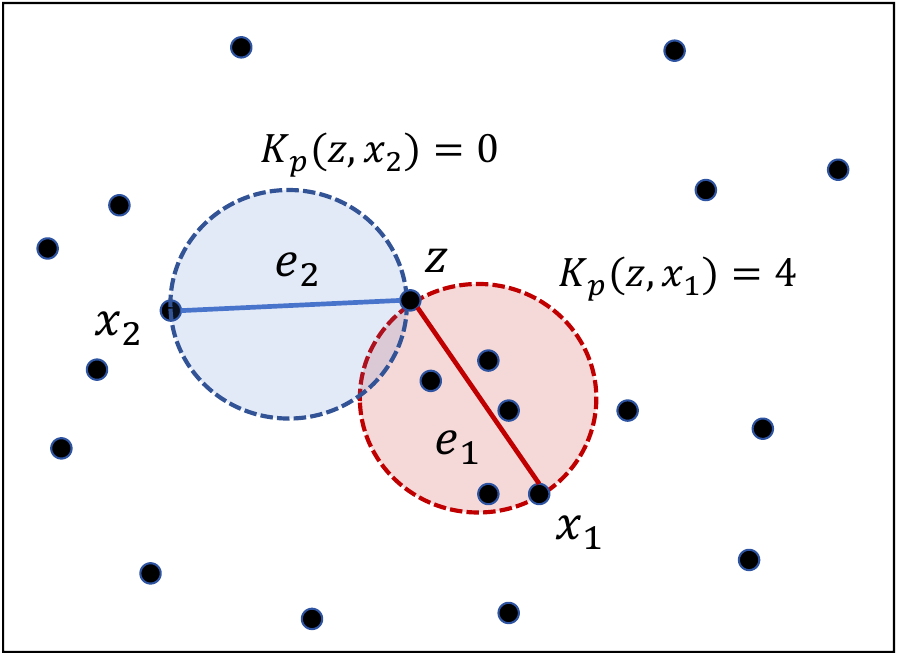}}
\caption{Example of edge purity orders with the same length.}
\label{same length diff order}
\end{center}
\vskip -0.2in
\end{figure}

\textbf{Intuitive Example Analysis.}
Purity order not only measures vertex redundancy around an edge, but also captures topological information beyond edge length.
Even among edges of equal length, those with lower purity orders typically exhibit more favorable structural properties.
As shown in Fig.~\ref{same length diff order}, consider two edges, $e_1$ and $e_2$, formed with vertices $x_1$ and $x_2$ from vertex $z$.
Despite having the same length, their purity orders are 4 and 0, respectively.
The edge $e_1$ passes through a denser region of vertices, which negatively affects the subsequent connections, making it harder to form a cohesive structure. 
In contrast, $e_2$, surrounded by sparser vertices, has minimal negative impact on the underlying connectivity of other vertices, suggesting more conducive to a well-structured solution.
These observations suggest that solutions with more lower-order pure edges may have a greater potential for optimality.


\subsection{Purity Law in TSP Optimal Solutions}
\label{subsection: purity law}
We quantitatively investigate the distribution of purity orders in optimal solutions across various instance scales and distributions, leading to the concept of the Purity Law.

\definecolor{macaronblue}{rgb}{0.502, 0.702, 1}
\definecolor{darkblue}{rgb}{0, 0, 0.55} 
\newtcolorbox{mybox}[2][]{colbacktitle=red!10!white,
colback=macaronblue!10!white,coltitle=darkblue!80!white,
title={#2},fonttitle=\bfseries,#1}
\begin{mybox}[attach title to upper={\  :\ }]{Purity Law}
The distribution of edges in optimal solutions follows a \textit{negative exponential law} based on their purity orders across various instances.
\end{mybox}

To validate the Purity Law, we conduct extensive statistical experiments on instances with varying scales and distributions. 
Specifically, we create datasets from 84 instance types with scales ranging from $20$ to $1000$, sampled from four classical distributions. 
The corresponding optimal solutions are computed using the LKH3 algorithm~\cite{helsgaun2017extension}.
A detailed description of the dataset is provided in Append.~\ref{stat dataset}.

\begin{table}[t]
    \vskip -0.15in
\caption{Fitting results of the  exponential function in Eq. \eqref{eq: pula}.}\label{tab_error}
    \begin{center}
    \begin{small}
    \begin{sc}
    \begin{tabular}{lccc}
    \toprule
        ~ & Fitting Error & $\alpha$ & $\beta$  \\ \midrule
        Mean & 2.23E-05 & 0.92 & 2.63  \\ 
        Variance & 8.45E-10 & 1.57E-04 & 1.49E-02  \\ \bottomrule
    \end{tabular}
    \end{sc}
    \end{small}
    \end{center}
\end{table}

\begin{figure}[t]
\begin{center}
\centerline{\includegraphics[width=\columnwidth]{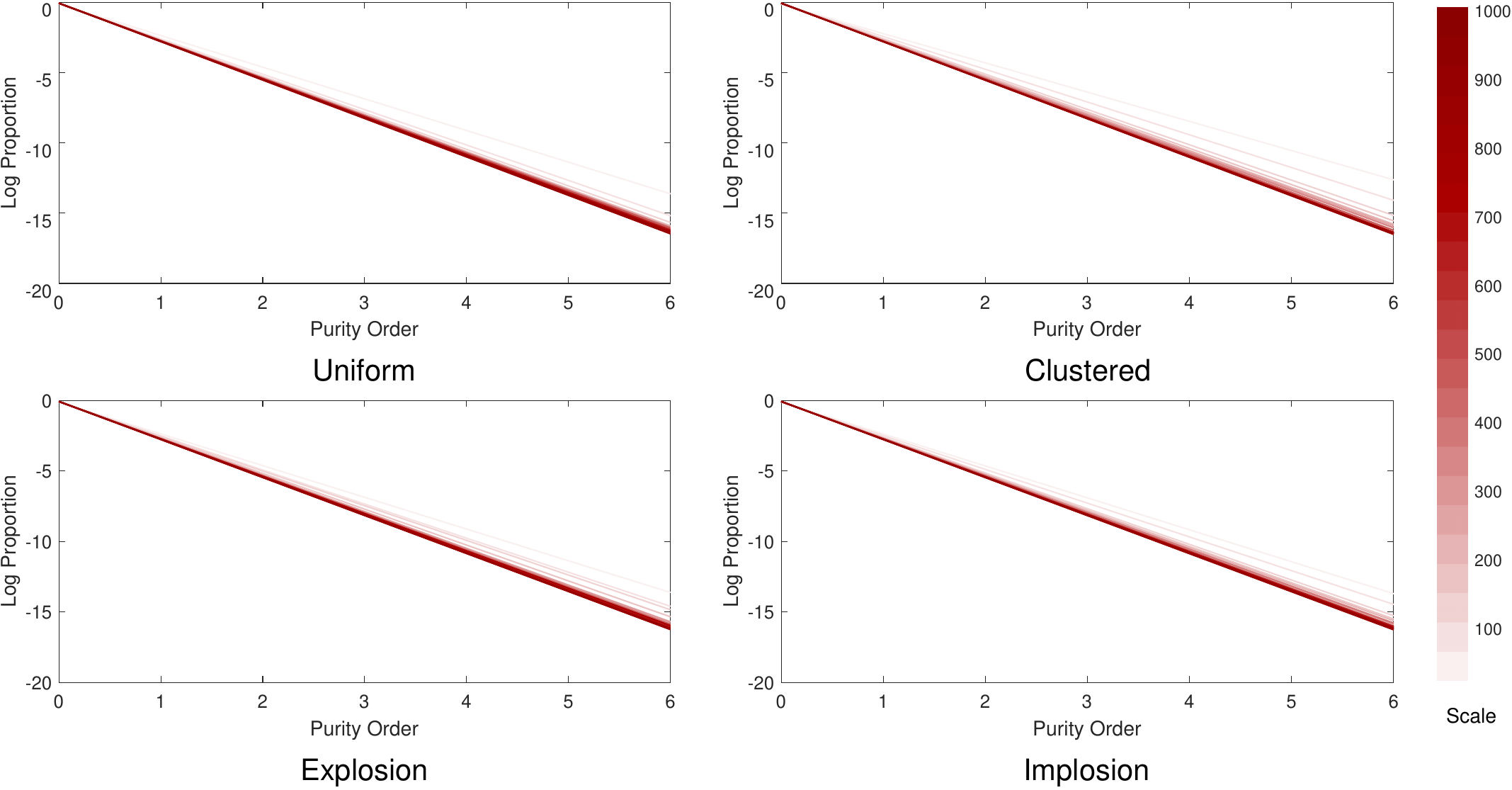}}
\vskip -0.1in
\caption{Purity Law curves under varying scales and distributions.}
\label{ratio law}
\end{center}
\vskip -0.2in
\end{figure}

For each instance type, we calculate the purity order of every edge in the optimal solution and then statistic the proportion $y$ of edges with a given purity order $k$, which is defined as:
\begin{equation}
\scalebox{0.95}{$
    y(k)=\frac{\sum_{(\tau_i^*, \tau_{j}^*) \in \boldsymbol{\tau^*}}\mathbb{I}(K_p(\tau_i^*, \tau_{j}^*) = k)}{N},\quad \ k \in [0, N-2],
    $}
\end{equation}
where $N$ is the instance scale, $(\tau_i^*, \tau_{j}^*)$ is the edge in the optimal solution $\boldsymbol{\tau^*}$, and $\mathbb{I}(\cdot)$ is the indicator function.
Next, we fit the proportions to a negative exponential function:
\begin{equation}\label{eq: pula}
    \log y(k)=-\beta k + \log\alpha,
\end{equation}
and present mean and variance of fitted parameters $\alpha$ and $\beta$, along with fitting errors, in Table \ref{tab_error}. 
Further fitting results for each instance type are available in Appendix~\ref{fitting result}.
The remarkably low mean and variance of fitting errors underscore that the negative exponential law reliably and universally applies across different instance scales and distributions.

The negative exponential relationship reveals that purity orders of edges and their proportions in the optimal solution decrease exponentially, confirming our earlier comprehension.
To better understand the decay rate, we visualize the fitted curves for different instance types in Fig.~\ref{ratio law}, where we plot the logarithm of $y$ for clarity.
As seen in Fig.~\ref{ratio law}, the coefficient $\alpha$ is invariant for both scales and distributions, with different curves nearly overlapping at the vertical axis intersection. 
The decay rate $\beta$ slightly increases with instance scale, indicating that, for larger-scale instances, the proportion of each purity order decreases more rapidly.

\subsection{Consistent Dominance of Lowest-order Pure Edges}

\begin{figure}[t]
\begin{center}
\includegraphics[width=\linewidth]{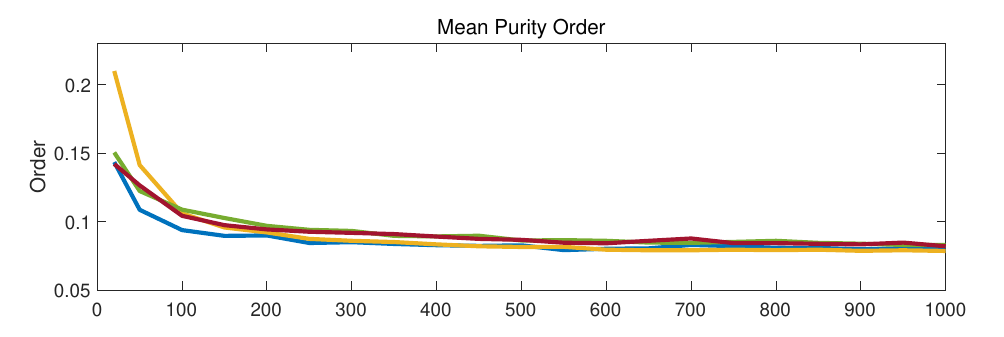}
\includegraphics[width=\linewidth]{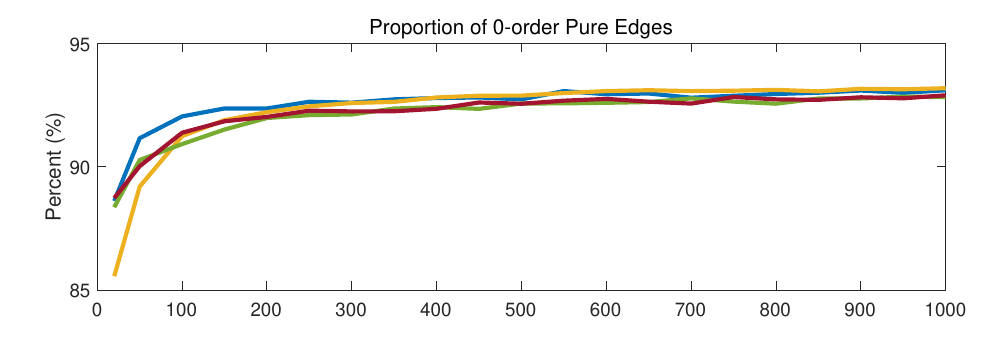}
\includegraphics[width=\linewidth]{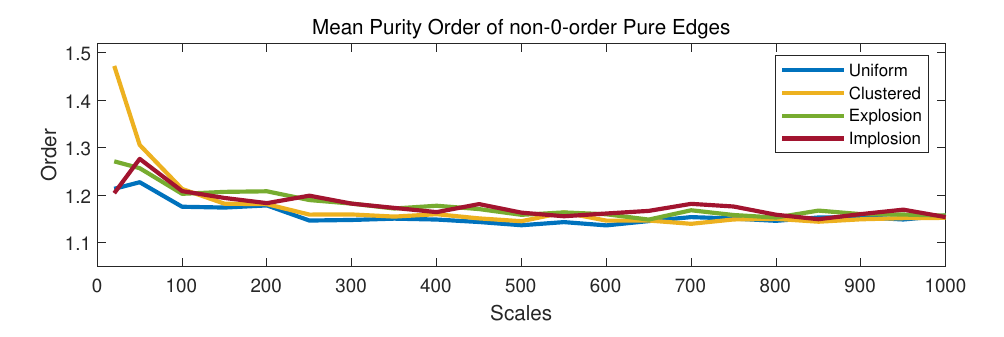}
\vskip -0.2in
\caption{Mean purity order, proportion of $0$-order pure edges, and average order of non-$0$-order pure edges for varying instance.}
\label{0order_ratio}
\end{center}
\vskip -0.2in
\end{figure}


The rapid decay of the negative exponential function ensures that edges with lower purity orders dominate the optimal solution, while edges with higher purity orders are seldom present.
To concretely illustrate this property, we calculate the mean purity order, the proportion of $0$-order pure edges, and the average order of non-$0$-order pure edges in optimal solutions for each instance type. The results, shown in Fig.~\ref{0order_ratio}, highlight the consistent dominance of $0$-order pure edges.

The variations in these metrics exhibit a steady trend across different instance scales and distributions, further reinforcing the universality and stability of the Purity Law. 
The mean purity order stabilizes around $0.1$, indicating that the optimal tours consistently maintain a quite low level of overall purity order. 
Furthermore, the proportion of $0$-order pure edges consistently stays around $92 \%$, reflecting their dominant presence in optimal solutions. This proportion also aligns with the fitting parameter $\alpha$. 
For non-$0$-order pure edges, which have a minimal purity order of $1$, the mean purity order remains around $1.15$. 
This suggests that, even among non-$0$-order pure edges, low-order pure edges are predominant, with higher-order pure edges being quite rare.
Notably, all these proportions remain nearly invariant when the instance scale exceeds $300$, underscoring the remarkable and dominant consistency of low-order pure edges.


\section{Policy Learning Inspired by Purity Patterns}

In standard neural learning for TSP, the tour length, as defined in Eq.~\eqref{eq: reinforce tsp}, is typically the only objective for policy optimization. 
However, relying solely on this reward signal, neural networks often fail to capture inherent structural patterns present across different instances. 
Despite efforts to modify the architecture of policy networks and learning modules in previous research, the absence of guidance from generalizable structural principles, limits effective learning of universal patterns, resulting in poor generalization.

Fortunately, the widespread and stable presence of purity patterns in optimal solutions presents a promising avenue for improving generalization.
Motivated by this, we introduce two key concepts--\textit{purity availability} and \textit{purity cost}--to characterize purity patterns for neural learning. 
By incorporating these into the policy training framework, we propose purity policy optimization to encourage alignment with the Purity Law, enhancing generalization in neural solvers.

\subsection{Purity Availability and Purity Cost}

In the Markov model of TSP, the state at time $t$ consists of the \textit{unvisited vertex set} $\mathcal{U}_t$ and the \textit{current partial solution} $\boldsymbol{\tau}_t$, which includes the visited vertex set $\mathcal{V}_t$.
The action $a_t$ corresponds to the vertex $\tau_t$ selected for the next visit. 
\begin{definition}[Purity Availability]
    The \textit{purity availability} $\phi(\cdot)$ of the unvisited vertex set $\mathcal{U}_t$ is defined as the average minimum available purity order, given by:
\begin{equation}
\scalebox{0.97}{$
    \phi (\mathcal{U}_t) = \frac{\sum_{x_i \in \mathcal{U}_t} \mathop{\min}_{\substack{x_j \in \mathcal{U}_t, j \neq i}} K_{p}(x_i, x_j)}{|\mathcal{U}_{t}|}, \quad 1\le t \le N. 
    $}
\end{equation}
\end{definition}
The purity availability $\phi (\mathcal{U}_t)$ measures the potential purity order of future edges that can be formed by the unvisited vertices.
We demonstrate that the $\phi$ is supermodular, meaning the absolute value of marginal gain in purity availability diminishes as the set $\mathcal{U}_t$ increases. The proof is in Appendix~\ref{proof_subm}.

\begin{proposition}[Supermodularity of Purity Availability]\label{prop: subm}
    The set function $\phi: 2^\mathcal{X} \to \mathbb{R} $, defined on subsets of the finite set $\mathcal{X}$, is supermodular. Specifically, for any subsets $\mathcal{A} \subseteq \mathcal{B} \subseteq \mathcal{X}$ and any vertex $x \in \mathcal{X} \setminus \mathcal{B}$, the following holds:
    \begin{equation}
        \scalebox{1.00}{$
            \phi(\mathcal{A} \cup \{x\}) - \phi(\mathcal{A}) \leq \phi(\mathcal{B} \cup \{x\}) - \phi(\mathcal{B}).
            $}
    \end{equation}
\end{proposition}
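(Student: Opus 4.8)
The plan is to establish supermodularity through its standard local characterization: it suffices to show that the marginal gain of a fixed vertex $x$ is nondecreasing in the base set, i.e. that $\delta_x(S) \le \delta_x(S \cup \{y\})$ for every $S \subseteq \mathcal{X}$ and every $y \notin S \cup \{x\}$, where $\delta_x(S) := \phi(S \cup \{x\}) - \phi(S)$. A telescoping argument along the chain from $\mathcal{A}$ to $\mathcal{B}$, adding the vertices of $\mathcal{B}\setminus\mathcal{A}$ one at a time, then recovers the general inequality $\phi(\mathcal{A}\cup\{x\}) - \phi(\mathcal{A}) \le \phi(\mathcal{B}\cup\{x\}) - \phi(\mathcal{B})$ stated in the proposition. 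This reduces the problem to a single-increment comparison.

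The first concrete step is to make the marginal explicit. Writing $n = |S|$, $a_i := \min_{x_j \in S,\, x_j \neq x_i} K_p(x_i,x_j)$ for the current per-vertex minimum, $k_i := K_p(x_i,x)$, and $k_{\min} := \min_{x_i \in S} k_i$, the definition of $\phi$ together with the identity $\min_{x_j \in (S\cup\{x\}),\, x_j \neq x_i} K_p(x_i,x_j) = \min(a_i, k_i)$ gives
\[
\delta_x(S) = \frac{\sum_{x_i \in S}\min(a_i,k_i) + k_{\min}}{n+1} - \frac{\sum_{x_i \in S}a_i}{n}.
\]
The essential monotonicity I would exploit is that each per-vertex minimum is nonincreasing under enlargement: adding a vertex only introduces new candidates to the inner minimization, so $\min(a_i,k_i) \le a_i$, and more generally the minimum over $S'$ is at most the minimum over $S$ whenever $S \subseteq S'$. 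Rewriting $\min(a_i,k_i) - a_i = -(a_i - k_i)^+$ converts the numerator differences into sums of truncated nonnegative quantities whose signs are controlled.

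The decisive step, and the one I expect to be the main obstacle, is comparing $\delta_x(S)$ with $\delta_x(S\cup\{y\})$ once the averaging normalization is accounted for. Because $\phi$ is a mean rather than a raw sum, it is not modular, and a naive term-by-term domination fails: the quantities $\phi(S)$, $\phi(S\cup\{x\})$ and $\phi(S\cup\{x,y\})$ carry the distinct denominators $n$, $n+1$ and $n+2$, so the inequality has to be read off from a comparison of weighted averages rather than of sums. Here I would bring in the geometric structure established earlier for purity orders — in particular the existence of a $0$-order pure neighbor for every vertex, together with the boundedness and monotone behaviour of the covering-set cardinalities $|N_c(\cdot)|$ — to control the newly inserted term $k_{\min}$ and the contribution of $y$, and thereby to fix the direction of the comparison. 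Reconciling this geometric control of the new-vertex terms with the denominator shift forced by the normalization is the crux of the argument; once it is in place the single-increment inequality follows, and supermodularity is obtained by the telescoping reduction of the first step.
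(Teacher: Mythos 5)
Your setup is sound and matches the paper's opening moves: the reduction to the local two-element characterization $\phi(S\cup\{x\})+\phi(S\cup\{y\})\le\phi(S\cup\{x,y\})+\phi(S)$ (your $\delta_x(S)\le\delta_x(S\cup\{y\})$ is equivalent to it, and the telescoping step is standard), and the explicit form of the marginal with $\min(a_i,k_i)$ and the extra term $k_{\min}$ for the inserted vertex is exactly what one needs. But the proof stops precisely where the work begins. You correctly identify the comparison of $\delta_x(S)$ with $\delta_x(S\cup\{y\})$ under the shifting denominators $n,\,n+1,\,n+2$ as the crux, and then you do not prove it; you only announce that "geometric control" will fix the direction of the inequality. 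That is a genuine gap, not a routine verification: a naive term-by-term domination fails (as you yourself note), and nothing in your outline pins down why the new-vertex terms $k_{\min}$ and the $y$-contribution cannot overwhelm the gains from the smaller per-vertex minima.

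Moreover, the specific tools you propose to import are the wrong ones. The existence of a $0$-order pure neighbor (Proposition~\ref{exist}) is a statement about the full vertex set $\mathcal{X}$; it says nothing about $\min_{x_j\in S}K_p(x_i,x_j)$ for a proper subset $S$, which can be arbitrarily large, so it gives no control over $k_{\min}$ or over the inserted terms. What actually closes the argument in the paper is different: partition $S$ into the vertices whose argmin stays inside $S$ after both insertions, those whose argmin moves to $x$, and those whose argmin moves to $y$. For the first block the inequality reduces to the convexity identity $\frac{2}{n+1}<\frac{1}{n}+\frac{1}{n+2}$ multiplied by a nonnegative minimum. For the other two blocks the key is the \emph{integrality} of the purity order: if the argmin moves to $x$, then $\min_{x_j\in S}K_p(x_i,x_j)\ge K_p(x_i,x)+1$, and this unit of slack, summed over the block, is exactly what absorbs the inserted terms $\min_{x_j\in S}K_p(x_j,x)$ after the denominator shift. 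Without the argmin-based partition and the integrality slack, your weighted-average comparison does not close, so as written the proposal does not constitute a proof.
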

This indicates that, as the solution construction progresses, the absolute value of marginal improvement in purity availability becomes more significant on smaller unvisited sets.

\begin{definition}[Purity Cost]
    The \textit{purity cost} $ C(\mathcal{U}_t, \tau_{t+1})$ for selecting an action $\tau_{t+1}$ at time $t$ is defined as:
    \begin{equation} 
        \begin{cases}
          K_p(\tau_t, \tau_{t+1}) + {\phi (\mathcal{U}_{t+1})} - {\phi (\mathcal{U}_{t})}, & \text{for } t < N, \\
          K_p(\tau_N, \tau_{1}), & \text{for } t = N. 
        \end{cases}
    \end{equation}
\end{definition}
The purity cost $C(\mathcal{U}_t, \tau_{t+1})$ includes both the purity order of the new edge formed by $\tau_t$ and $\tau_{t+1}$, and the difference in purity availability before and after the action $\tau_{t+1}$.
This formulation captures both the contribution of action $\tau_{t+1}$ to the purity order of the current partial solution and its impact on the purity potential of the remaining unvisited vertices.

\subsection{Policy Optimization with Purity Weightings}

To improve generalization, we integrate purity costs into the policy optimization process, aligning the model with the Purity Law. This encourages the learning of structural patterns that are consistent across varying TSP instances.
To assess the potential of the current state-action pair in fostering a low-purity structure for future solution construction, we introduce the purity weighting based on purity costs.
\begin{definition}[Purity Weightings]
    Let $\delta$ be a discount factor.
    The \textit{purity weighting} $ W(\mathcal{U}_t, \tau_{t+1}) $ is defined as: 
    \begin{equation} 
    W (\mathcal{U}_t, \tau_{t+1}) = 1 + \sum \limits_{j=t}^{N} \delta^{j-t} C(\mathcal{U}_j, \tau_{j+1}).  
    \end{equation} 
\end{definition}

Building on this characterization, we propose the \textit{purity policy gradient} $\nabla \mathcal{L}_{PUPO} (\theta | \mathcal{X})$ as the following formulation: 
\begin{equation} 
    \mathbb{E}_{p_{\theta}(\boldsymbol{\tau} | \mathcal{X})} \Biggl[ L(\boldsymbol{\tau})  \sum \limits_{t=2}^{N} W_{t} \nabla \log p_{\theta}(\tau_{t} | \tau_{1:t-1}, \mathcal{X})  \Biggr],
\end{equation}
where $W_{t+1}$ denotes the shorthand for $W(\mathcal{U}_t, \tau_{t+1})$.

\begin{table*}[t]
\caption{The experimental results of average gap ($\%$) on the randomly generated dataset with different distributions and scales after training each model using both the vanilla and PUPO methods, where - means out of memory, bold formatting represents superior results. It can be observed that PUPO training enhances the performance of six models across nearly all instance type.}\label{result_rand}
\vskip 0.05in
\setlength{\tabcolsep}{1mm}{
\begin{tabular*}{\linewidth}{@{}cc||cc|cc||cc|cc||cc|cc}
\toprule
\multirow{2}{*}{Distribution} & \multirow{2}{*}{Scale} & \multicolumn{2}{c|}{AM-50}                & \multicolumn{2}{c||}{AM-100}         & \multicolumn{2}{c|}{PF-50}                & \multicolumn{2}{c||}{PF-100}         & \multicolumn{2}{c|}{INViT-50}             & \multicolumn{2}{c}{INViT-100} \\ \cline{3-14} 
 &  & Vanilla       & \multicolumn{1}{c|}{PUPO} & Vanilla & \multicolumn{1}{c||}{PUPO} & Vanilla       & \multicolumn{1}{c|}{PUPO} & Vanilla & \multicolumn{1}{c||}{PUPO} & Vanilla       & \multicolumn{1}{c|}{PUPO} & Vanilla    & PUPO             \\ \cline{3-14}
\multirow{4}{*}{Uniform}      & 100                    & \textbf{5.21} & 5.35                      & 5.07    & \textbf{4.61}             & \textbf{4.16} & 4.31                      & 3.64    & \textbf{3.39}             & \textbf{2.14} & 2.17                      & 2.61       & \textbf{2.35}    \\
& 1000                   & 35.51         & \textbf{34.01}            & 31.23   & \textbf{30.16}            & 41.00         & \textbf{39.10}            & 30.57   & \textbf{29.93}            & 7.27          & \textbf{7.17}             & 7.41       & \textbf{6.26}    \\
& 5000                   & 67.99         & \textbf{65.12}            & 80.64   & \textbf{68.42}            & 113.55        & \textbf{104.09}           & 80.59   & \textbf{77.53}            & 9.41          & \textbf{9.09}             & 9.39       & \textbf{8.05}    \\
& 10000                  & -             & -                         & -       & -                         & -             & -                         & -       & -                         & 7.89          & \textbf{7.63}             & 7.51       & \textbf{6.25}    \\ \cline{3-14}
\multirow{4}{*}{Clustered}    & 100                    & \textbf{7.75} & 8.03                      & 7.66    & \textbf{6.07}             & \textbf{7.67} & 7.71                      & 7.24    & \textbf{6.90}             & 2.90          & \textbf{2.86}             & 3.59       & \textbf{3.22}    \\
& 1000                   & 41.14         & \textbf{39.82}            & 37.99   & \textbf{33.84}            & 51.04         & \textbf{49.58}            & 47.50   & \textbf{44.11}            & 8.44          & \textbf{8.00}             & 8.37       & \textbf{7.20}    \\
& 5000                   & 77.78         & \textbf{75.93}            & 76.37   & \textbf{70.03}            & 137.51        & \textbf{131.61}           & 137.15  & \textbf{121.79}           & 9.90          & \textbf{9.45}             & 9.65       & \textbf{8.55}    \\
& 10000                  & -             & -                         & -       & -                         & -             & -                         & -       & -                         & 10.85         & \textbf{10.09}            & 10.47      & \textbf{9.11}    \\ \cline{3-14}
\multirow{4}{*}{Explosion}    & 100                    & \textbf{5.01} & 5.18                      & 5.04    & \textbf{4.71}             & \textbf{4.98} & 5.12                      & 4.74    & \textbf{4.65}             & 2.15          & \textbf{2.11}             & 2.70       & \textbf{2.41}    \\
& 1000                   & 35.57         & \textbf{35.07}            & 32.97   & \textbf{30.20}            & 46.04         & \textbf{45.34}            & 41.60   & \textbf{37.73}            & 9.61          & \textbf{9.18}             & 9.61       & \textbf{8.77}    \\
& 5000                   & 71.95         & \textbf{66.66}            & 74.55   & \textbf{66.33}            & 133.99        & \textbf{129.02}           & 120.96  & \textbf{110.16}           & 12.44         & \textbf{11.65}            & 11.45      & \textbf{10.41}   \\
& 10000                  & -             & -                         & -       & -                         & -             & -                         & -       & -                         & 11.22         & \textbf{10.61}            & 10.96      & \textbf{9.62}    \\ \cline{3-14}
\multirow{4}{*}{Implosion}    & 100                    & \textbf{5.02} & 5.17                      & 5.12    & \textbf{4.64}             & \textbf{4.71} & 4.91                      & 4.49    & \textbf{4.35}             & 2.44          & \textbf{2.38}             & 2.77       & \textbf{2.62}    \\
& 1000                   & 37.38         & \textbf{37.12}            & 32.44   & \textbf{30.94}            & 46.54         & \textbf{44.83}            & 36.11   & \textbf{34.47}            & 7.56          & \textbf{7.50}             & 7.81       & \textbf{6.90}    \\
& 5000                   & 78.57         & \textbf{74.05}            & 72.87   & \textbf{66.58}            & 151.37        & \textbf{138.71}           & 120.67  & \textbf{103.74}           & 9.20          & \textbf{8.86}             & 9.44       & \textbf{8.96}    \\
& 10000                  & -             & -                         & -       & -                         & -             & -                         & -       & -                         & 8.36          & \textbf{7.68}             & 8.27       & \textbf{6.78} \\
     \bottomrule
\end{tabular*}%
}
\end{table*}

The complete description of the Purity Policy Optimization~(PUPO) framework is presented in Appendix~\ref{PUPO_alg}. 
PUPO establishes upon the REINFORCE algorithm with a baseline, incorporating generalizable purity patterns into the modified policy gradient.
At each stage of the solution construction, PUPO employs a discounted purity cost that reflects the purity potential of both the current partial solution and the unvisited vertex set, encouraging the model to favor actions with lower purity orders during training. 
This approach helps the model to learn consistent, cross-instance patterns that align with the purity law, thereby enhancing its generalization capability.
Notably, PUPO is flexible and can be integrated with various popular constructive neural solvers, without any alterations to the network architecture.

\section{Numerical Experiments}


To validate the effect of PUPO on enhancing generalization, we compare the performance of both vanilla policy optimization and purity policy optimization across several state-of-the-art constructive neural solvers. Additionally, we conduct a comprehensive evaluation of generalization performance and purity metrics on well-known public datasets.

\subsection{Experimental Setups}
\textbf{Dataset.} 
To ensure the adequacy of the experiments, we validate the model's performance on two categories of datasets. The randomly generated dataset used in this paper is the same as that in INViT \cite{pmlr-v235-fang24c}, which is widely adopted to testify existing DRL approach.
It contains 16 subsets and corresponding (near-)optimal solutions for TSP, including 4 distributions (uniform, clustered, explosion, and implosion) and 4 scales (100, 1000, 5000 and 10000).
The real-world dataset we used is TSPLIB, a well-known TSP library~\cite{reinelt1991tsplib} that contains 100 instances with various nodes distributions and their optimal solutions. These instances come from practical applications with scale ranging from 14 to 85,900. In our experiment, we consider all instances with no more than 10000 nodes.

\textbf{Comparison Methods.}
Although PUPO can be integrated with any DRL-based neural constructive model, we select three representative SOTA methods with high recognition for our experiments: 
(1) AM~\cite{kool2018attention}, a high-performance neural constructive approach widely regarded as a benchmark, which first incorporates the transformer to solve the TSP; 
(2) PF~\cite{jin2023pointerformer}, a neural constructive approach that designs reversible residual networks and multi-pointer networks to efficiently limit memory consumption; 
(3) INViT~\cite{pmlr-v235-fang24c}, the latest method that enhances generalization ability by constraining the action space and employing multi-view encoding. 
For each method, we conduct training on scales of 50 and 100, resulting in a total of 6 comparison configurations: AM-50, AM-100, PF-50, PF-100, INViT-50, and INViT-100.

\textbf{Evaluation Metrics.} 
We report three widely adopted performance metrics to evaluate the generalizability of each comparison method, including the average length of solutions, the average gap to the optimal solutions, and the average solving time.
The gap characterizes the relative difference between the output solution of neural models and the optimal solution, which is calculated as the following:
\begin{equation}
gap=\frac{L(\boldsymbol{\tau}^{model})-L(\boldsymbol{\tau}^{opt})}{L(\boldsymbol{\tau}^{opt})} \times 100 \% 
\end{equation}
We also present three metrics to reflect the ability of the model to perceive purity information: the average purity order (APO (all) for short), the proportion of 0-order pure edges (Prop-0 (\%) for short), and the average order of non-0-order pure edges (APO (non-0) for short) for each solution.

\textbf{Experimental Settings.}
All the numerical experiments are implemented on an NVIDIA GeForce RTX 3090 GPU with 24 GB of memory, paired with a 12th Gen Intel(R) Core(TM) i9-12900 CPU.
We train each model using the vanilla REINFORCE algorithm and PUPO, respectively.
In each training paradigm, we use the original network architectures provided in the source code, \textit{without any modifications to modules}.
Additionally, all training procedures and hyper-parameter remain completely consistent with the original source code, except for the learning rate.
Due to the difference in the policy gradient between PUPO and the original method, we adjust the learning rates for the six models during PUPO. 
Specifically, the learning rates are set to 0.0001, 0.00015, 0.0001, 0.00017, 0.00011, 0.00012 for AM-50, AM-100, Pointerformer-50, Pointerformer-100, INViT-50, and INViT-100, respectively.
All models are trained on instances randomly sampled from a uniform distribution.

\begin{figure}[t]
\begin{center}
\centerline{\includegraphics[width=\columnwidth]{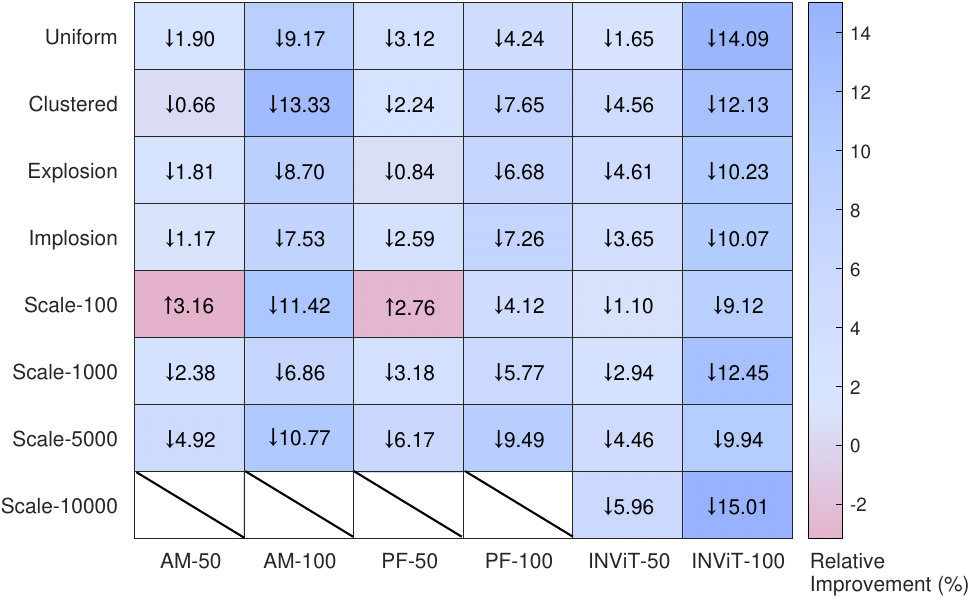}}
\caption{Heatmap of relatively improved accuracy after PUPO training of each model on the randomly generated dataset. From the figure, it can be observed that PUPO achieve outstanding improvement of generalizability on most instance types.}\label{result_improve}
\end{center}
\vskip -0.2in
\end{figure}

\subsection{Generalization Performance Analysis}

\textbf{Performance on Randomly Generated Dataset.} 
Table \ref{result_rand} presents the performance of six models on a randomly generated dataset, trained using both the vanilla and PUPO methods.
The experimental results for tour length and solving time are provided in Appendix \ref{experimental result}.
It can be observed that PUPO training enhances the performance of six models across nearly all instance types.
In particular, AM-100, PF-100 and INViT-100 demonstrate superior performance across all distributions and scales compared to those trained with the vanilla. 
Notably, the PUPO-trained INViT-100 accomplishes a gap of 6.78\% on Implosion-10000, whereas the Vanilla-trained model can only reach a gap of 8.27\%.

To better illustrate the improvement in generalizability with PUPO, Figure \ref{result_improve} presents a heatmap of the average relative performance improvement across various scales and distributions. 
Combining PUPO with larger training scales leads to more significant gains. 
Specifically, models trained at scale $100$ achieved approximately 10\% better performance, while those trained at the scale of 50 saw only a 3\% gain.

From the perspective of test scale, PUPO training significantly improves model generalization performance on larger instances.
For example, INViT-100 achieves a 15.01\% enhancement on the scale 10000, and AM-100 shows a 10.77\% gain on the scale of 5000. 
On scales closer to the training scale, diverse phenomena can be observed. 
AM-100 accomplishes an 11.42\% uplift on the scale of 100, while AM-50 and PF-50 experience negative performance changes on the same scale. 
This suggests that for models with poor expressive ability, PUPO may involve in a trade-off, sacrificing accuracy on smaller-scale instances while learning patterns more suited to larger scales. 
This trade-off is linked to the implicit regularization feature of PUPO, which is discussed in the subsequent part. 
From a distributional perspective, PUPO leads to positive average relative improvements across all distributions, indicating an enhancement in generalizability with respect to distributions. 

\textbf{Performance on Real-world Dataset.} 
Table \ref{result_tsplib} presents the performance on TSPLIB, further demonstrating that PUPO enhances the generalization ability on real-world data.

\begin{table}[t]
\vskip -0.1in
\caption{Performance of average gap ($\%$) on TSPLIB after training each model using both the vanilla and PUPO methods. \textcolor{blue}{Blue} highlights the decrease in the gap, while \textcolor{red}{red} indicates its increase. From the table, it can be observed that PUPO can also enhance the generalization ability on real-world dataset.}\label{result_tsplib}
    \vskip 0.05in
    \resizebox{\linewidth}{!}{
    \begin{tabular}{cc|cccccc}
    \toprule
         ~ & ~ & \multicolumn{2}{c}{1 $\sim100$} & \multicolumn{2}{c}{$101 \sim1000$} & \multicolumn{2}{c}{$1001 \sim5000$}  \\ 
        \midrule
        \multirow{2}{*}{AM-50} & Vanilla & 5.82 & & 17.39 & & 48.41 &  \\ 
        & PUPO & \textbf{5.17} & \textcolor{blue}{$\downarrow$ 0.65} & \textbf{16.14} & \textcolor{blue}{$\downarrow$ 1.25} & \textbf{46.48} & \textcolor{blue}{$\downarrow$ 1.93}   \\ 
        \cdashline{1-8} 
        \multirow{2}{*}{AM-100} & Vanilla & 4.98 & & 14.12 & & 43.33 &  \\ 
        & PUPO & \textbf{4.48} & \textcolor{blue}{$\downarrow$ 0.50} & \textbf{12.17} & \textcolor{blue}{$\downarrow$ 1.95} & \textbf{40.46} & \textcolor{blue}{$\downarrow$ 2.97}  \\ 
        \cdashline{1-8}
        \multirow{2}{*}{PF-50} & Vanilla & \textbf{7.42} & & 25.96 & & 75.57 &  \\ 
        & PUPO & 8.37 & \textcolor{red}{$\uparrow$ 0.95} & \textbf{24.77} & \textcolor{blue}{$\downarrow$ 1.19} & \textbf{72.58} & \textcolor{blue}{$\downarrow$ 2.99}  \\ 
        \cdashline{1-8} 
        \multirow{2}{*}{PF-100} & Vanilla & 7.82 & & 23.37 & & 66.05 &  \\ 
        & PUPO & \textbf{7.16}& \textcolor{blue}{$\downarrow$ 0.66} & \textbf{22.68}& \textcolor{blue}{$\downarrow$ 0.69} & \textbf{60.86}& \textcolor{blue}{$\downarrow$ 5.19}  \\ 
        \cdashline{1-8}
        \multirow{2}{*}{INViT-50} & Vanilla & \textbf{1.71} & & 4.97 & & 9.65 &  \\ 
        & PUPO & 2.02 & \textcolor{red}{$\uparrow$ 0.31} & \textbf{4.67} & \textcolor{blue}{$\downarrow$ 0.30} & \textbf{8.49} & \textcolor{blue}{$\downarrow$ 1.16}  \\ 
        \cdashline{1-8} 
        \multirow{2}{*}{INViT-100} & Vanilla & 2.43 & & 5.60 & & 9.32 &  \\ 
        & PUPO & \textbf{2.16} & \textcolor{blue}{$\downarrow$ 0.27} & \textbf{4.97} & \textcolor{blue}{$\downarrow$ 0.63} & \textbf{8.68} & \textcolor{blue}{$\downarrow$ 0.64}  \\ \bottomrule
    \end{tabular}
    }
    \vskip -0.1in
\end{table}

Furthermore, it is evident that the integration of PUPO with competitive models yields significant improvements in generalization. 
As a SOTA constructive neural architecture, INViT shows notable performance boosts after PUPO training, regardless of the training scale. 
Although PUPO also enhanced the generalizability of AM and PF, their global perception and decision modules remain vulnerable, resulting in larger performance gaps as the scale increases.
This observation abstracts us to focus on developing network architectures that incorporate PuLa in future work.

\subsection{Computational Efficiency Analysis}
Figure \ref{time_bar} illustrates the relative deviations in solving time between Vanilla-trained and PUPO-trained models, where positive values indicate that PUPO-trained models hold shorter solving times. 
Following the figure, the relative deviations are within 5\% across all instance types, suggesting negligible differences between the two, even the PUPO-trained models tend to solve faster in most cases. 
Since PUPO only provides guidance during the training phase, it does not introduce additional time overhead during the inference phase. 
Due to the tensorizable computation of PUPO, the training time does not increase significantly. 
A detailed comparison of training times can be found in Appendix \ref{experimental result}.

\begin{figure}[t]
\begin{center}
\centerline{\includegraphics[width=\columnwidth]{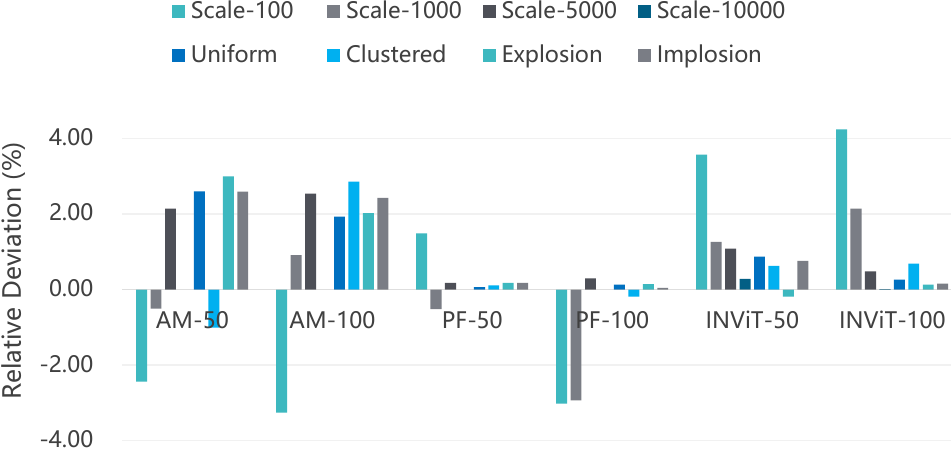}}
\caption{The bar chart of relative deviations (\%) in solving time between Vanilla- and PUPO-trained models, where positive values indicate that PUPO-trained models hold shorter solving times. From the figure, it can be observed that there is no significant difference between Vanilla- and PUPO-trained model.}
\label{time_bar}
\vskip -0.3in
\end{center}
\end{figure}

\subsection{Learning Mechanisms Analysis}
\textbf{Purity Perception.} 
To further explore the role of PUPO during the training process, Table \ref{purity_eval} presents three metrics that evaluate the purity of tours obtained by INViT-100. 
Numerical results of purity metrics for all models are available in Appendix \ref{experimental result}. 
The table demonstrates that PUPO-trained models consistently generate solutions with more outstanding purity. 
Compared to vanilla training, PUPO enhances the Prop-0 metric across all types, and it lead to a reduction in the two kinds of APO metric across nearly all types. 
Notably, the largest average improvement in Prop-0 occurs at the scale of 10,000, with a 1.1\% increase, which also corresponds to the scale where INViT-100 achieves its most promoted generalization performance. 
These results suggest that PUPO facilitates the emergence of PuLa during training, leading to enhanced model generalization.

\begin{table}[t]
\vskip -0.1in
\caption{The experimental results of Vanilla- and PUPO-trained model on three purity evaluation metrics, where bold formatting represents superior results. From the table, it can be observed that PUPO enhances the ability of purity perception.}\label{purity_eval}
\vskip 0.05in
\resizebox{\linewidth}{!}{%
\begin{tabular}{c|cc|cc|cc}
\toprule
\multicolumn{1}{c}{} & \multicolumn{2}{|c}{Prop-0 (\%)}         & \multicolumn{2}{|c}{APO (all)}               & \multicolumn{2}{|c}{APO (non-0)} \\
\multicolumn{1}{c}{} & \multicolumn{1}{|c}{Vanilla} & \multicolumn{1}{c}{PUPO} & \multicolumn{1}{|c}{Vanilla} & \multicolumn{1}{c}{PUPO} & \multicolumn{1}{|c}{Vanilla}  & \multicolumn{1}{c}{PUPO} \\  \midrule
100                  & 80.02                       & \textbf{80.53}                    & 0.13                        & \textbf{0.12}                     & 1.16                         & \textbf{1.12}                    \\
1000                 & 85.46                       & \textbf{86.48}                    & 0.23                        & \textbf{0.19}                     & 1.81                         & \textbf{1.66}                     \\
5000                 & 43.36                       & \textbf{43.95}                    & 0.31                        & \textbf{0.24}                     & 2.52                         & \textbf{2.10}                     \\
10000                & 87.17                       & \textbf{88.27}                    & \textbf{0.54}                        & 0.55                     & \textbf{4.28}                         & 4.86                     \\
Uniform              & 74.36                       & \textbf{75.23}                    & 0.17                        & \textbf{0.14}                     & 1.44                         & \textbf{1.33}                     \\
Clustered            & 73.82                       & \textbf{74.65}                    & 0.34                        & \textbf{0.33}                     & \textbf{2.76}                         & 3.02                     \\
Explosion            & 73.91                       & \textbf{74.71}                    & 0.48                        & \textbf{0.42}                     & 3.85                         & \textbf{3.64}                     \\
Implosion            & 73.91                       & \textbf{74.64}                    & 0.21                        & \textbf{0.20}                     & \textbf{1.73}                         & 1.74        \\ \bottomrule         
\end{tabular}%
}
\end{table}

\textbf{Implicit Regularization.}
Furthermore, we calculate the sum of the Frobenius norm of the parameter matrices for each model after different training, as presented in Table \ref{detailed_norm}. 
It can be observed that PUPO training reduces the sum of the parameter norms, acting as an implicit regularization mechanism. 
In contrast, we also perform explicit regularization training by directly incorporating the sum of the norms into the reward function, but it does not result in improved generalizability. 
This suggests that PUPO effectively mitigates overfitting on the specific training set, encouraging the model to learn more generalizable structural patterns.

\begin{table}[t]
\centering
\vskip -0.1in
\caption{The sum of Frobenius norm of the
parameter matrices for each model after different training, where bold formatting represents superior results. It can be observed that PUPO reduces the sum
of the parameter norms, acting as implicit regularization. }\label{detailed_norm}
\vskip 0.05in
\begin{tabular}{c|cccc}
\toprule
        & AM-50 & AM-100 & INViT-50 & INViT-100 \\ \midrule
Vanilla & 63.07 & 64.07  & 79.21    & 79.10     \\
PUPO    & \textbf{60.22} & \textbf{62.82}  & \textbf{79.15}    & \textbf{78.89} \\ \bottomrule    
\end{tabular}%
\end{table}

\section{Conclusion}
In this paper, we reveal Purity Law~(PuLa), a fundamental structural principle for optimal TSP solutions, which defines that edge prevalence grows exponentially with the sparsity of surrounding vertices.
And we propose Purity Policy Optimization~(PUPO) to explicitly promote alignment with PuLa during the solution construction process.
Extensive experiments demonstrate that PUPO can be seamlessly integrated with popular neural solvers, significantly enhancing their generalization performance without incurring additional computational overhead during inference.
PuLa provides a novel perspective on TSP,  revealing a systematic bias toward local sparsity in global optima validated across diverse instances statistically.
In future work, on one hand, we aim to extend PuLa to general routing problems and provide a more profound theoretical explanation of its mechanism.
On the other hand, the implicit regularization properties induced by PUPO and other learning phenomenon characterized in the experiment are also worthy of further study.
What's more, we plan to explore more effective training methods for leveraging PuLa, and design network architectures that integrate PuLa to promote the perception to it, thereby achieving greater generalizability.

\section*{Impact Statement}
Due to the NP-hard nature and wide application range, TSP is fundamental in combinatorial optimization problems. 
However, the complex structure of problem makes it challenging to conclude the general properties of TSP. Compared to traditional time-consuming methods, neural approaches are considered to possess the potential for fast and high-quality solutions. But existing learning-based methods suffer from poor generalizability. Therefore, it is meaningful to uncover the universal and stable patterns present in optimal TSP solutions and apply them to guide the learning process of models, thereby improving their generalizability without additional computational overhead.
In addition to improving the related machine learning fields, the contribution can be further extended to industries like logistics, circuit compilation or computational biology.

\nocite{langley00}

\bibliography{example_paper}
\bibliographystyle{icml2025}

\newpage
\appendix
\onecolumn

\section{Details of the Radar Chart Construction}\label{radar_process}
We select AM-100 and INViT-100 to represent the traditional and advanced solvers, respectively. The average gaps for each solver are computed across four scales and five distributions, which are listed in Table \ref{result_radar}. The reciprocal of the gap is then taken, and the values are normalized to the maximum value within each dimension. These derived metrics are used to plot the radar chart.

\begin{table}[h]
\caption{The Experimental results used to plot the radar chart.}\label{result_radar}
\vskip 0.15in
\resizebox{\textwidth}{!}{%
\begin{tabular}{cc|ccccccccc}
\toprule
                           &         & Scale-100 & Scale-1000 & Scale-5000 & Scale-10000 & Uniform & Clustered & Explosion & Implosion & TSPLIB \\ \midrule
\multirow{2}{*}{AM-100}    & Vanilla & 5.57      & 29.75      & 69.55      & 100.00      & 38.98   & 40.67     & 37.52     & 36.81     & 20.81  \\
                           & PUPO    & 4.90      & 27.46      & 62.36      & 86.00       & 34.40   & 36.65     & 33.74     & 34.05     & 19.03  \\
\multirow{2}{*}{INViT-100} & Vanilla & 2.82      & 7.76       & 8.85       & 9.30        & 6.73    & 8.02      & 8.68      & 7.07      & 5.78   \\
                           & PUPO    & 2.55      & 6.82       & 7.73       & 7.94        & 5.73    & 7.02      & 7.80      & 6.31      & 5.27 \\ \bottomrule 
\end{tabular}%
}
\end{table}


\section{Topological Properties of $0$-order Pure Edges}\label{proof topo}

As the class of edges with the lowest degree of redundancy, 0-order pure edges are demonstrated by the following propositions to possess a series of favorable topological properties.

\begin{proposition}\label{exist}
    Given an instance $\mathcal{X}$, for any vertex $x \in \mathcal{X}$, there exists at least one vertex that can form a 0-order pure edge with $x$.
\end{proposition}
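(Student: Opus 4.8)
The plan is to exhibit an explicit $0$-order pure neighbor of $x$ by taking a nearest neighbor. Recall from the definition of the covering set that an edge $(x, y)$ is $0$-order pure exactly when no vertex $z \in \mathcal{X}$ satisfies $(x - z)^T (y - z) < 0$; geometrically, the open disk having $xy$ as a diameter must contain no vertex of $\mathcal{X}$. I would therefore let $y$ be any vertex of $\mathcal{X} \setminus \{x\}$ closest to $x$ (such a vertex exists because $\mathcal{X}$ is finite and has at least two points), and claim that $K_p(x, y) = 0$.

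The heart of the argument is the observation that any vertex in the covering set of $(x, y)$ would have to lie strictly closer to $x$ than $y$ does. Suppose toward contradiction that some $z \in N_{c}((x, y))$, i.e.\ $(x - z)^T (y - z) < 0$. Writing $u = x - z$ and $v = y - z$ so that $x - y = u - v$, I would expand
\begin{equation}
|x - y|^2 = |u - v|^2 = |u|^2 + |v|^2 - 2\, u^T v > |u|^2 + |v|^2 \ge |u|^2 = |x - z|^2,
\end{equation}
where the strict inequality uses $u^T v = (x - z)^T (y - z) < 0$. Hence $|x - z| < |x - y|$. Since $z$ belongs to the covering set, it is distinct from both endpoints (the dot product vanishes when $z$ coincides with $x$ or $y$), so $z$ is a genuine vertex strictly closer to $x$ than $y$ is, contradicting the choice of $y$ as a nearest neighbor.

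This contradiction forces $N_{c}((x, y)) = \emptyset$, that is $K_p(x, y) = 0$, which establishes the claimed existence. I do not anticipate any serious obstacle: the whole proof reduces to the one-line vector identity above combined with the minimality of the nearest neighbor. The only points deserving a sentence of care are the degenerate cases, namely ensuring $\mathcal{X}$ contains at least two vertices so that a neighbor is available, and noting that ties for the nearest neighbor are harmless since any minimizer works and the displayed inequality is strict. I would also remark that the same computation yields $|y - z| < |x - y|$ symmetrically; this is not needed for existence but foreshadows the connectivity and convexity properties of $0$-order pure edges established later in this appendix.
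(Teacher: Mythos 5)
Your proposal is correct and follows essentially the same route as the paper's own proof: pick a nearest neighbor $y$ of $x$ and use the expansion of $\|x-y\|^2$ to show that any vertex in the covering set would be strictly closer to $x$, contradicting minimality. Your write-up is in fact slightly more careful than the paper's, since you keep the first inequality strict (the paper writes $\ge$ where $>$ is what the contradiction actually requires) and you note the degenerate cases explicitly.
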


\begin{proof}
    We prove the proposition by contradiction. 
    
    Assume that for any \( x \in \mathcal{X} \), we choose 
    $$ y(x) := \arg \min \|x - y\|^2 .$$
    Since no vertex can form a $0$-order pure edge with \( x \), we have that
    $$ K_p(x, y(x)) > 0, \quad \forall x \in \mathcal{X} .$$
    Therefore, there must exist a point \( z(x) \) such that 
    $$ (x-z(x))^T (y(x)-z(x))<0 .$$
    Then we can derive that
    \begin{equation}\notag
        \begin{aligned}
            &  \|x-y(x)\|^2 \\
            = & \|x-z(x)+z(x)-y(x)\|^2 \\
            = & \|x-z(x)\|^2 +\|z(x)-y(x)\|^2 - 2(x-z(x))^T (y(x)-z(x)) \\
            \ge & \|x-z(x)\|^2 +\|z(x)-y(x)\|^2 \\
            \ge & \|x-z(x)\|^2,            
        \end{aligned}
    \end{equation}
    which means that the distance between \( z(x) \) and \( x \) is smaller than the distance between \( y(x) \) and \( x \). Therefore, it contradicts the assumption that \( y(x) \) is the nearest neighbor of \( x \).
\end{proof}

\begin{proposition}\label{connected}
    Given an instance $\mathcal{X}$, the subgraph $G_0=(\mathcal{X}, \mathcal{E}_0)$ is connected, where $\mathcal{E}_0$ is the edge set of all 0-order pure edges.
\end{proposition}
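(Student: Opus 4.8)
The plan is to prove connectivity by contradiction, exploiting the geometric content already built into the definition of $N_c$: an edge $(x_i,x_j)$ is $0$-order pure exactly when no vertex lies in the open disk whose diameter is $x_ix_j$, so $G_0$ coincides with the Gabriel graph of $\mathcal{X}$. The crucial quantitative fact I would establish first is a short distance lemma: whenever $(x_i - x)^T(x_j - x) < 0$ for some $x \in \mathcal{X}$, both $\|x_i - x\| < \|x_i - x_j\|$ and $\|x_j - x\| < \|x_i - x_j\|$. This follows from the identity $\|x_i - x_j\|^2 = \|x_i - x\|^2 + \|x_j - x\|^2 - 2(x_i-x)^T(x_j-x)$, where the negative inner product forces $\|x_i - x_j\|^2$ to strictly exceed each of the two squared distances. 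This is the same computation that drives the proof of Proposition~\ref{exist}, so it should be routine to reuse.

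With this lemma in hand, suppose for contradiction that $G_0$ is disconnected. Then I choose one connected component, let $A$ be its vertex set and $B := \mathcal{X}\setminus A$, so that $A$ and $B$ are both nonempty and no $0$-order pure edge joins a vertex of $A$ to a vertex of $B$. By finiteness of $\mathcal{X}$, pick a crossing pair $(a,b)\in A\times B$ minimizing $\|a-b\|$. Because $(a,b)$ is not $0$-order pure, its covering set $N_c(e_{ab})$ is nonempty, so there exists a vertex $z$ with $(a-z)^T(b-z) < 0$. The distance lemma then gives $\|a-z\| < \|a-b\|$ and $\|z-b\| < \|a-b\|$; in particular $z$ is distinct from both $a$ and $b$, since equality of one distance would otherwise be forced.

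The final step is to route $z$ to one side of the partition. If $z \in B$, then $(a,z)$ is a crossing pair strictly shorter than $(a,b)$; if $z \in A$, then $(z,b)$ is such a pair; either way this contradicts the minimality of $\|a-b\|$. Hence no disconnecting partition can exist, and $G_0$ is connected. I expect the main obstacle to be organizational rather than technical: one must set up the bipartition so that \emph{no crossing $0$-order pure edge} is precisely what disconnectedness supplies, and then handle the case split on whether $z$ lands in $A$ or $B$ cleanly, using $z \neq a,b$ to guarantee that the replacement pair genuinely crosses the cut. An equivalent alternative would be to show that $\mathcal{E}_0$ contains every edge of a Euclidean minimum spanning tree, via the standard cut-exchange argument powered by the same distance lemma, which immediately exhibits a connected spanning subgraph; I would nonetheless keep the direct contradiction as the primary proof, since it mirrors the style of Proposition~\ref{exist}.
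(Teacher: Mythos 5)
Your proposal is correct and follows essentially the same argument as the paper: assume disconnectedness, take a minimum-distance pair crossing the cut, use the nonempty covering set to produce a vertex $z$ strictly closer to both endpoints, and derive a contradiction with minimality. If anything, your write-up is slightly more careful than the paper's, which states only one of the two distance inequalities before handling the case split on where $f_3$ lies.
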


\begin{proof}
    We prove the proposition by contradiction. 
    
    Suppose the subgraph \( G_0 \) is not connected. Without loss of generality, assume that \( G_0 \) has two connected components, denoted as \( F_1 \) and \( F_2 \). Let \( f_1 \in F_1 \) and \( f_2 \in F_2 \) be such that
    $$ \text{dist}(f_1, f_2) = \text{dist}(F_1, F_2) = \min_{a \in F_1} \min_{b \in F_2} \text{dist}(a, b) .$$
    Since \( e_{f_1 f_2} \notin G_0 \), we have that
    $$ K_p(f_1, f_2) > 0 .$$
    Therefore, there must exist a point \( f_3 \) such that the following inequality holds:
    $$ (f_1-f_3)^T (f_2-f_3)<0 ,$$
    which means that the distance between \( f_3 \) and \( f_1 \) is smaller than the distance between \( f_2 \) and \( f_1 \). Regardless of whether \( f_3 \in F_1 \) or \( f_3 \in F_2 \), this leads to a contradiction with the assumption that the distance between \( f_1 \) and \( f_2 \) is the shortest distance between \( F_1 \) and \( F_2 \).

\end{proof}

\begin{proposition}\label{convex}
    Given an instance $\mathcal{X}$, for any vertex $x \in \mathcal{X}$, the polyhedron formed by its 0-order pure neighbors $D_0^x$ is convex.
\end{proposition}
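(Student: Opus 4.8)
The plan is to read $D_0^x$ as the set of $0$-order pure neighbors of $x$ (equivalently, the Gabriel neighbors of $x$, since $K_p(x,y)=0$ means no vertex of $\mathcal{X}$ lies strictly inside the disk with diameter $xy$), and to read the claim as the assertion that these neighbors lie in convex position, so that $\mathrm{conv}(D_0^x)$ is a convex polytope each of whose vertices is a genuine neighbor. The whole argument rests on a single inequality extracted directly from the definition of $0$-order purity.

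First I would establish the key lemma: for any two distinct neighbors $y,y'\in D_0^x$,
\[
(y-x)^{\top}(y'-x)\le \|y-x\|^2 .
\]
To see this, note that $y'$ being $0$-order pure means $(x-z)^{\top}(y'-z)\ge 0$ for every $z\in\mathcal{X}$; choosing the witness $z=y\in\mathcal{X}$ gives $(x-y)^{\top}(y'-y)\ge 0$, and expanding $y'-y=(y'-x)-(y-x)$ rearranges exactly into the displayed inequality. The point of this step is that it is symmetric in role but not in value: $y$ itself attains $(y-x)^{\top}(y-x)=\|y-x\|^2$, so the inequality says that the linear functional $\ell_y(w):=(y-x)^{\top}(w-x)$ is maximized over $D_0^x$ precisely at $w=y$.

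Next I would convert this into convex position. Because $y\neq x$, the functional $\ell_y$ is non-constant, and a point maximizing a non-constant linear functional over a finite set is an exposed point of its convex hull; hence $y$ is a vertex of $\mathrm{conv}(D_0^x)$ (and, since $\ell_y(x)=0<\|y-x\|^2$, also of $\mathrm{conv}(\{x\}\cup D_0^x)$). As the same construction applies to every $y\in D_0^x$, every neighbor is a vertex, which is exactly the statement that $D_0^x$ is in convex position and that the polyhedron it forms is convex. I would note that the argument is dimension-free, so it covers the general ``polyhedron'' phrasing, not just the planar case.

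The main obstacle is the degenerate case in which the maximum of $\ell_y$ is attained by $y$ together with some other neighbor $y'$: then $\|y-x\|^2=(y-x)^{\top}(y'-x)$ forces $y'$ onto the supporting line $\{w:\ell_y(w)=\|y-x\|^2\}$ through $y$ perpendicular to $y-x$, so $y$ need not be a strict vertex but still lies on the boundary of $\mathrm{conv}(D_0^x)$. I would handle this by invoking a general-position assumption (no three vertices collinear, no degenerate concyclic configurations), which holds with probability one for the randomly sampled instances studied here and makes each maximizer unique; under it the supporting line meets $D_0^x$ only at $y$, upgrading ``boundary point'' to ``vertex'' and yielding strict convex position.
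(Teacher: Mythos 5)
Your proof is correct and reaches the statement by a genuinely different route from the paper. Both arguments ultimately rest on the same elementary consequence of $0$-order purity: for two neighbors $y,y'\in D_0^x$, taking $z=y$ as a candidate member of the (empty) covering set of the edge $(x,y')$ gives $(x-y)^{\top}(y'-y)\ge 0$, i.e.\ the angle at $y$ in the triangle $x\,y\,y'$ is at most $\pi/2$. The paper exploits this locally: for three adjacent neighbors $A,B,C$ it decomposes the interior angle as $\angle ABC=\angle ABX+\angle XBC$, bounds each summand by $\pi/2$ using the purity of $XA$ and $XC$ witnessed at $B$, and concludes every interior angle of the polygon is below $\pi$. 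You instead rearrange the same inequality into $(y-x)^{\top}(y'-x)\le\|y-x\|^2$ and observe that each neighbor $y$ maximizes the linear functional $\ell_y(w)=(y-x)^{\top}(w-x)$ over $D_0^x$, hence is an exposed (or at least boundary) point of the convex hull. Your version buys two things: it avoids the implicit assumptions in the paper's angle decomposition (that the neighbors are ordered angularly around $x$ and that the ray from $B$ toward $X$ lies inside the wedge $\angle ABC$, which needs the star-shapedness of the polygon with respect to $x$), and it is dimension-free; the paper's version is shorter and more pictorial. Both share the same degenerate-case caveat: because the covering set is defined with a strict inequality, purity only yields angles $\le\pi/2$ (equivalently, possible ties in your maximization), so one strictly obtains weak convex position unless a general-position assumption is invoked --- the paper silently writes strict inequalities, whereas you flag this and resolve it explicitly, which is the more careful treatment.
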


\begin{proof}
    Consider any three adjacent 0-order pure neighbors \( A \), \( B \), and \( C \) of point \( X \). We have that
    $$ \angle ABC = \angle ABX + \angle XBC .$$
    Since \( A \) and \( C \) are 0-order pure neighbors of \( X \), it follows that 
    $$ \angle ABX, \angle XBC < \frac{\pi}{2} ,$$
    and thus \( \angle ABC < \pi \). 
    
    Due to the arbitrariness of \( A \), \( B \), and \( C \), the polyhedron formed by its 0-order pure neighbors $D_0^x$ is convex.
\end{proof}

Proposition \ref{exist} establishes the existence of 0-order pure neighbors for any point, providing a foundation for subsequent analysis. 
While proposition \ref{connected} demonstrates that the subgraph formed by 0-order pure edges possesses overall graph structural properties. 
And proposition \ref{convex} describes the intrinsic topological features of the 0-order neighbor set for any given point. 
The three propositions above characterize the mathematical properties of purity order theoretically. 

\section{Dataset Description}\label{stat dataset}
To construct the dataset in statistical experiments, we first generate 21 different scales within the range of 20 to 1000, with intervals of 50 except 20. For each scale, we consider four widely recognized classical distributions, which is uniform, cluster, explosion and implosion, resulting in 84 different instance types totally.
For instance types with scales under 500, 256 instances are randomly sampled to form the dataset, while for those with sizes of 500 or above, the number of instances is reduced to 128, which is due to the fact that optimal solutions of large-scale instances require an excessive amount of time.
The optimal solutions for all instances are solved by LKH-3~\cite{helsgaun2000effective,helsgaun2017extension}, which is the SOTA heuristic capable of producing optimal solutions even for large-scale instances.

\section{Detailed Fitting Result}\label{fitting result}
The detailed values of the fitting error, $\alpha$ and $\beta$ for each instance type are listed in Table \ref{tab_fitting_result_whole}. 
According to the table, the low mean and variance of the fitting errors demonstrate the reliability and university of the fitting results across different instance scales and distributions.

\begin{table}[h]
\caption{The fitting errors, coefficients $\alpha$ and $\beta$ in Sec.~\ref{subsection: purity law}}\label{tab_fitting_result_whole}
\vskip 0.15in
\resizebox{\linewidth}{!}{%
\begin{tabular}{c|cccc|cccc|cccc}
\toprule
     & \multicolumn{4}{c|}{Fitting Error}            & \multicolumn{4}{c|}{$\alpha$}                       & \multicolumn{4}{c}{$\beta$}                       \\
     & Uniform  & Clustered & Explosion & Implosion & Uniform & Clustered & Explosion & Implosion & Uniform & Clustered & Explosion & Implosion \\ \midrule
20   & 6.58E-05 & 2.45E-04  & 1.01E-04  & 5.71E-05  & 0.8860   & 0.8554    & 0.8835    & 0.8872    & 2.2514  & 2.0800      & 2.2483    & 2.2630     \\
50   & 5.42E-05 & 7.90E-05  & 3.96E-05  & 5.05E-05  & 0.9115  & 0.8919    & 0.9027    & 0.9002    & 2.5166  & 2.3295    & 2.4152    & 2.3923    \\
100  & 2.41E-05 & 3.75E-05  & 1.85E-05  & 3.61E-05  & 0.9204  & 0.9124    & 0.9091    & 0.9137    & 2.5974  & 2.5133    & 2.4567    & 2.5286    \\
150  & 2.54E-05 & 2.57E-05  & 2.96E-05  & 1.78E-05  & 0.9236  & 0.9188    & 0.9150     & 0.9184    & 2.6410   & 2.5782    & 2.5412    & 2.5681    \\
200  & 2.18E-05 & 2.20E-05  & 2.62E-05  & 1.63E-05  & 0.9236  & 0.9221    & 0.9197    & 0.9202    & 2.6384  & 2.6192    & 2.6002    & 2.5887    \\
250  & 1.18E-05 & 1.35E-05  & 2.38E-05  & 2.42E-05  & 0.9263  & 0.9245    & 0.9209    & 0.9227    & 2.6568  & 2.6366    & 2.6066    & 2.6337    \\
300  & 1.26E-05 & 1.49E-05  & 2.06E-05  & 2.08E-05  & 0.9259  & 0.9258    & 0.9211    & 0.9224    & 2.6535  & 2.6578    & 2.6057    & 2.6201    \\
350  & 1.39E-05 & 1.40E-05  & 1.61E-05  & 1.71E-05  & 0.9273  & 0.9263    & 0.9236    & 0.9224    & 2.6762  & 2.6621    & 2.6318    & 2.6166    \\
400  & 1.33E-05 & 1.54E-05  & 1.80E-05  & 1.05E-05  & 0.9278  & 0.9280     & 0.9241    & 0.9235    & 2.6814  & 2.6918    & 2.6430     & 2.6206    \\
450  & 1.04E-05 & 1.12E-05  & 1.61E-05  & 1.52E-05  & 0.9280   & 0.9288    & 0.9234    & 0.9260     & 2.6805  & 2.6947    & 2.6289    & 2.6633    \\
500  & 8.38E-06 & 1.15E-05  & 1.18E-05  & 1.28E-05  & 0.9273  & 0.9288    & 0.9256    & 0.9255    & 2.6634  & 2.6936    & 2.6515    & 2.6536    \\
550  & 1.04E-05 & 1.49E-05  & 1.94E-05  & 1.31E-05  & 0.9306  & 0.9298    & 0.9257    & 0.9267    & 2.7190   & 2.7186    & 2.6615    & 2.6673    \\
600  & 7.55E-06 & 1.08E-05  & 1.48E-05  & 1.26E-05  & 0.9294  & 0.9306    & 0.9259    & 0.9275    & 2.6931  & 2.7207    & 2.6591    & 2.6803    \\
650  & 1.13E-05 & 1.37E-05  & 9.63E-06  & 1.79E-05  & 0.9297  & 0.9310     & 0.9262    & 0.9263    & 2.7058  & 2.7293    & 2.6535    & 2.6689    \\
700  & 1.34E-05 & 1.02E-05  & 1.40E-05  & 1.77E-05  & 0.9280   & 0.9306    & 0.9277    & 0.9256    & 2.6856  & 2.7175    & 2.6877    & 2.6563    \\
750  & 1.22E-05 & 1.19E-05  & 1.50E-05  & 1.26E-05  & 0.9287  & 0.9308    & 0.9264    & 0.9282    & 2.6959  & 2.7251    & 2.6658    & 2.6930     \\
800  & 1.22E-05 & 1.29E-05  & 9.97E-06  & 1.19E-05  & 0.9294  & 0.9311    & 0.9255    & 0.9273    & 2.7040   & 2.7323    & 2.6459    & 2.6750     \\
850  & 1.30E-05 & 1.12E-05  & 1.80E-05  & 1.33E-05  & 0.9300    & 0.9305    & 0.9276    & 0.9271    & 2.7151  & 2.7184    & 2.6909    & 2.6707    \\
900  & 1.33E-05 & 1.03E-05  & 1.08E-05  & 1.46E-05  & 0.9309  & 0.9316    & 0.9276    & 0.9281    & 2.7296  & 2.7357    & 2.6790     & 2.6906    \\
950  & 1.33E-05 & 1.38E-05  & 1.44E-05  & 1.41E-05  & 0.9299  & 0.9314    & 0.9284    & 0.9277    & 2.7132  & 2.7376    & 2.6951    & 2.6855    \\
1000 & 1.51E-05 & 1.21E-05  & 1.26E-05  & 1.24E-05  & 0.9310   & 0.9318    & 0.9282    & 0.9289    & 2.7340   & 2.7413    & 2.6890     & 2.6978   \\ \bottomrule
\end{tabular}%
}
\vskip -0.1in
\end{table}

\section{Proof of the supermodularity of Purity Availability $\phi$}\label{proof_subm}

\begin{proposition}\label{subm}
    The set function $\phi: 2^X \to \mathbb{R} $ defined on the subsets of the finite set $X$ is supermodular, that is, for any subset $A \subseteq B \subseteq X$ and any $x \in X \setminus B$, the following inequality holds:
    \[
    \phi(A \cup \{x\}) - \phi(A) \leq \phi(B \cup \{x\}) - \phi(B).
    \]
\end{proposition}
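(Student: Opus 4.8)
The plan is to work directly from the definition, writing $\phi(S) = \frac{1}{|S|}\sum_{x_i \in S} m_S(x_i)$, where $m_S(x_i) := \min_{x_j \in S \setminus \{x_i\}} K_p(x_i,x_j)$ denotes the minimum available purity order of $x_i$ inside $S$. The first step is to record the two elementary facts that drive everything: since the inner minimum is taken over a larger index set as $S$ grows, $m_S(x_i)$ is non-increasing in $S$ for every fixed $x_i$; moreover each map $S \mapsto m_S(x_i)$ is itself supermodular, being a minimum of the fixed values $\{K_p(x_i,x_j)\}_j$ over $S$, since the marginal effect of adjoining $x_k$ equals $-\max\{0,\, m_S(x_i) - K_p(x_i,x_k)\}$, which is non-decreasing in $S$ precisely because $m_S(x_i)$ is non-increasing. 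A crucial structural point to flag here is that $K_p(x_i,x_j)$ counts vertices inside a fixed diameter circle in the ambient set $\mathcal{X}$, so all these values are constants that do not change as $S$ varies.

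Second, I would reduce the general inequality for $\mathcal{A} \subseteq \mathcal{B}$ to the local, two-element condition. By the standard characterization, $\phi$ is supermodular iff for every $S$ and every pair of distinct $x,y \notin S$,
\[
\phi(S \cup \{x\}) + \phi(S \cup \{y\}) \le \phi(S) + \phi(S \cup \{x,y\}),
\]
with the general case recovered by telescoping along a chain from $\mathcal{A}$ to $\mathcal{B}$. This isolates a comparison among four sets whose cardinalities are $s, s+1, s+1, s+2$ where $s = |S|$. To make the incremental structure explicit, I would expand the marginal of the numerator $M(R) := \sum_{x_i \in R} m_R(x_i)$ upon adjoining a vertex $x$ to a base $R$:
\[
M(R \cup \{x\}) - M(R) = \min_{x_j \in R} K_p(x,x_j) - \sum_{x_i \in R} \max\{0,\, m_R(x_i) - K_p(x_i,x)\},
\]
i.e. the self-term of the new vertex minus the total ``clipping'' it induces on the incumbent minima. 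Substituting $R \in \{S,\, S \cup \{y\}\}$ and invoking the per-vertex supermodularity of the first step controls each incumbent clipping term in the favorable direction; the self-term, by contrast, moves against us, since $\min_{x_j \in R} K_p(x,x_j)$ can only shrink when the base grows.

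The main obstacle, and the part I expect to demand real care, is the cardinality normalization: because $\phi$ is an \emph{average}, the four terms carry mismatched denominators $\tfrac1s, \tfrac1{s+1}, \tfrac1{s+1}, \tfrac1{s+2}$, so the claim does \emph{not} reduce to supermodularity of the numerator $M$ alone. I would therefore clear denominators, reducing the local condition to the polynomial inequality
\[
s(s+2)\big[M(S\cup\{x\}) + M(S\cup\{y\})\big] \le (s+1)(s+2)\,M(S) + s(s+1)\,M(S\cup\{x,y\}),
\]
and then attempt to close it by combining the monotonicity and supermodularity of the per-vertex minima with the global boundedness $0 \le K_p \le N-2$, which caps how large the unfavorable new-vertex self-terms can be relative to the averaged incumbent contributions. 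The delicate point is to verify that the favorable shift in the clipping terms, together with the extra incumbent contributed by the second adjoined vertex, is enough to absorb both the lost self-term and the denominator mismatch. If a direct bound proves too weak, the fallback is to exploit the geometry of the covering set more heavily, using the fixed-diameter-circle interpretation to obtain uniform control of the $K_p$ values across all four sets and thereby tighten the estimate.
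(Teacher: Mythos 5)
Your opening moves coincide with the paper's: both reduce supermodularity to the local two-point condition $\phi(S\cup\{v_1\})+\phi(S\cup\{v_2\})\le \phi(S)+\phi(S\cup\{v_1,v_2\})$ (recovering the general case by telescoping), and both then face the same two obstacles you correctly identify --- the new vertex's self-term $\min_{x_j}K_p(v_1,x_j)$, which moves the wrong way, and the mismatched denominators $\tfrac1{|S|},\tfrac1{|S|+1},\tfrac1{|S|+1},\tfrac1{|S|+2}$. The problem is that your proposal stops exactly at the point where the proof has to happen: you write down the cleared-denominator inequality and then say you would ``attempt to close it'' using monotonicity and supermodularity of the per-vertex minima together with the global bound $0\le K_p\le N-2$, with an unspecified geometric fallback. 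That final step is the entire content of the proposition, and the tools you name do not close it. In particular, supermodularity of each map $S\mapsto m_S(x_i)$ only controls the clipping terms and says nothing about the self-term or the $\tfrac{1}{|S|}$-versus-$\tfrac{1}{|S|+2}$ mismatch, and a crude cap $K_p\le N-2$ on the self-term has no matching quantity to be absorbed by.

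The paper's proof supplies the missing mechanism. It partitions $U=S$ into $U_0$ (incumbents whose minimizer stays inside $U$), $U_1$ (minimizer becomes $v_1$), and $U_2$ (minimizer becomes $v_2$), and treats the three groups separately. For $U_0$ the coefficient identity $\tfrac{2}{u+1}-\tfrac1u-\tfrac1{u+2}<0$ suffices. For $U_1$ (and symmetrically $U_2$) the decisive ingredient is an \emph{integrality gap}: if $x_i$'s minimizer switches to $v_1$, then $\min_{x_j\in U}K_p(x_i,x_j)\ge K_p(x_i,v_1)+1$, since purity orders are integers and the switch means strict improvement. Summing this ``$+1$'' over $U_1$ produces a slack of order $-|U_1|(|U|+2)$ in the common denominator $|U|(|U|+1)(|U|+2)$, which --- combined with $\sum_{x_i\in U_1}K_p(x_i,v_1)\ge |U_1|\min_{x_j\in U}K_p(x_j,v_1)$ --- is exactly what absorbs the self-term of $v_1$ and the denominator mismatch, leaving a quantity bounded by $-2|U_1|^2-2|U_1|\le 0$. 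Nothing in your plan generates this slack: without the argmin-based partition and the $+1$ from integrality, the unfavorable terms are not controlled, so the proposal as written has a genuine gap at the decisive step.
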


\begin{proof}
Given subsets $U \subseteq X$ and any $v_1, v_2 \in X \setminus U$, we prove the equivalent definition of submodular functions:
\begin{equation}
    \phi(U \cup \{v_1\}) + \phi(U \cup \{v_2\}) \leq \phi(U \cup \{v_1, v_2\}) - \phi(U).
\end{equation}
First, we have
\begin{align*}
      \phi (U) &= \frac{\sum \limits_{x_i \in U} \mathop{\min} \limits_{\substack{x_j \in U \\ j \neq i}} K_{p}(x_i, x_j)}{|U|}. \\
      \phi (U \cup \{v_1\}) &= \frac{\sum \limits_{x_i \in U} \mathop{\min} \limits_{\substack{x_j \in {U \cup \{v_1\}} \\ j \neq i}} K_{p}(x_i, x_j) + \mathop{\min} \limits_{\substack{x_j \in U}} K_{p}(v_1, x_j)}{|U|+1}   \\
      \phi (U \cup \{v_2\}) &= \frac{\sum \limits_{x_i \in U} \mathop{\min} \limits_{\substack{x_j \in {U \cup \{v_2\}} \\ j \neq i}} K_{p}(x_i, x_j) + \mathop{\min} \limits_{\substack{x_j \in U}} K_{p}(v_2, x_j)}{|U|+1}   \\
      \phi (U \cup \{v_1, v_2\}) &= \frac{\sum \limits_{x_i \in U} \mathop{\min} \limits_{\substack{x_j \in {U \cup \{v_1, v_2\}} \\ j \neq i}} K_{p}(x_i, x_j) 
      + \mathop{\min} \limits_{\substack{x_j \in {U \cup \{v_2\}}}} K_{p}(v_1, x_j) 
      + \mathop{\min} \limits_{\substack{x_j \in {U \cup \{v_1\}}}} K_{p}(v_2, x_j)}{|U|+2}.   \\
\end{align*}
By the following relations, we partition $U$ into three parts, denoted as $U_0, U_1, U_2$, respectively. 
\begin{align*}
 U_0 &= \{x_i \mid \arg \min_{y} K_p(x_i, y) \in U\} \\
U_1 &= \{x_i \mid \arg \min_{y} K_p(x_i, y) = v_1\} \\
U_2 &= \{x_i \mid \arg \min_{y} K_p(x_i, y) = v_2\} 
\end{align*}
Then, we prove that the following expression is non-positive from three parts.
\[\left( \phi(U \cup \{v_1\}) + \phi(U \cup \{v_2\})\right) - \left(\phi(U \cup \{v_1, v_2\}) + \phi(U)\right).\]

\textbf{Part 1}
\begin{align*}
&\sum_{x_i \in U_0} \min_{x_j \in U} K_p(x_i, x_j) \left[ \frac{2}{|U|+1} - \left( \frac{1}{|U|} + \frac{1}{|U|+2} \right) \right] \\
\leq &\sum_{x_i \in U_0} \min_{x_j \in U} K_p(x_i, x_j) \left( \frac{-1}{|U|(|U|+1)(|U|+2)} \right) \leq 0
\end{align*}

\textbf{Part 2}
\begin{align*}
&\sum_{x_i \in U_1} \left( \left( \frac{\min_{x_j \in U}K_p(x_i, x_j)}{|U|+1}  
-  \frac{\min_{x_j \in U}K_p(x_i, x_j)}{|U|} \right)
+\left( 
\frac{K_p(x_i, v_1)}{|U|+1}
-  \frac{K_p(x_i, v_1)}{|U|+2} \right) \right) 
\\
&+  \left( \frac{\min_{x_j \in U}K_p(x_j, v_1)}{|U|+1}
-  \frac{\min_{x_j \in U \cup \{v_2\}}K_p(x_j, v_1)}{|U|+2} \right)\\
= & \sum_{x_i \in U_1} \left( -
\frac{\min_{x_j \in U}K_p(x_i, x_j)}{|U|(|U|+1)}  
+  \frac{K_p(x_i, v_1)}{(|U|+1)(|U|+2)} \right)  
+  \frac{\min_{x_j \in U}K_p(x_j, v_1)}{(|U|+1)(|U|+2)}
 \\
 \leq & \sum_{x_i \in U_1} \left( -
\frac{K_p(x_i, v_1)+1}{|U|(|U|+1)}  
+  \frac{K_p(x_i, v_1)}{(|U|+1)(|U|+2)} \right)  
+  \frac{\min_{x_j \in U}K_p(x_j, v_1)}{(|U|+1)(|U|+2)}\\
= & \frac{1}{|U|(|U|+1)(|U|+2)}\left(
-2 \sum_{x_j \in U_1}K_p(x_i, v_1) 
+ |U|\min_{x_j \in U}K_p(x_j, v_1)
- |U_1|(|U|+2)
   \right)\\
\leq & \frac{1}{|U|(|U|+1)(|U|+2)}\left(
-2|U_1|\min_{x_j \in U}K_p(x_j, v_1) 
+ |U|\min_{x_j \in U}K_p(x_j, v_1)
- |U_1|(|U|+2)
   \right)\\
\leq & \frac{1}{|U|(|U|+1)(|U|+2)}\left(
-2|U_1|^2 - 2|U_1|
   \right)
\leq 0
\end{align*}

\textbf{Part 3}
\begin{align*}
&\sum_{x_i \in U_2} \left( \left( \frac{\min_{x_j \in U}K_p(x_i, x_j)}{|U|+1}  
-  \frac{\min_{x_j \in U}K_p(x_i, x_j)}{|U|} \right)
+\left( 
\frac{K_p(x_i, v_2)}{|U|+1}
-  \frac{K_p(x_i, v_2)}{|U|+2} \right) \right) 
\\
&+  \left( \frac{\min_{x_j \in U}K_p(x_j, v_2)}{|U|+1}
-  \frac{\min_{x_j \in U \cup \{v_1\}}K_p(x_j, v_2)}{|U|+2} \right)\\
= & \sum_{x_i \in U_2} \left( -
\frac{\min_{x_j \in U}K_p(x_i, x_j)}{|U|(|U|+1)}  
+  \frac{K_p(x_i, v_2)}{(|U|+1)(|U|+2)} \right)  
+  \frac{\min_{x_j \in U}K_p(x_j, v_2)}{(|U|+1)(|U|+2)}
 \\
 \leq & \sum_{x_i \in U_2} \left( -
\frac{K_p(x_i, v_2)+1}{|U|(|U|+1)}  
+  \frac{K_p(x_i, v_2)}{(|U|+1)(|U|+2)} \right)  
+  \frac{\min_{x_j \in U}K_p(x_j, v_2)}{(|U|+1)(|U|+2)}\\
= & \frac{1}{|U|(|U|+1)(|U|+2)}\left(
-2 \sum_{x_j \in U_2}K_p(x_i, v_2) 
+ |U|\min_{x_j \in U}K_p(x_j, v_2)
- |U_2|(|U|+2)
   \right)\\
\leq & \frac{1}{|U|(|U|+1)(|U|+2)}\left(
-2|U_2|\min_{x_j \in U}K_p(x_j, v_2) 
+ |U|\min_{x_j \in U}K_p(x_j, v_2)
- |U_2|(|U|+2)
   \right)\\
\leq & \frac{1}{|U|(|U|+1)(|U|+2)}\left(
-2|U_2|^2 - 2|U_2|
   \right)
\leq 0
\end{align*}

The gain consists of three parts, each of which is non-positive. Therefore, the overall gain is non-positive, and the supermodular property is thus proven.

\end{proof}

\section{The Whole PUPO Algorithm}\label{PUPO_alg}
The whole algorithm of PUPO is presented in Algorithm \ref{alg:PUPO Training}. PUPO is established upon the REINFORCE with baseline, incorporating PuLa information into the modified policy gradient.
At different stages of solution construction, PUPO introduces discounted cost information that reflects the purity potential of the partial solution and the unvisited vertex set, which encourages the model to take actions with lower purity orders at each step during the training process. 
This approach helps the model to learn consistent patterns with the PuLa prior, an information that is independent of specific instances, thereby enhancing its generalization ability.
Notably, PUPO can be easily integrated with arbitrary existing constructive neural solvers, without any alterations to network architecture.
\begin{algorithm}[tb]
\caption{PUPO Training}\label{alg:PUPO Training}
\begin{algorithmic}
\STATE \textbf{Input:} number of epochs \( E \), steps per epoch \( M \), batch size \( B \), scale of training instances \( N \), discount factor  \( \gamma \), learning rate \( \delta \)
\STATE Init \( \theta \), \( \theta^{BL} \leftarrow \theta \)
\FOR{epoch = 1, \dots, \( E \)}
    \FOR{step = 1, \dots, \( M \)}
        \STATE \(\forall i \in \{1, 2, \dots, B\} \)
        \STATE \( s_i \leftarrow \text{RandomInstance()} \quad  \)
        \STATE \( \boldsymbol{\tau}^i \leftarrow \text{SampleRollout}(s_i, p_{\theta}) \quad  \)
        \STATE \( \boldsymbol{\tau}^{i, BL} \leftarrow \text{GreedyRollout}(s_i, p_{\theta^{BL}}) \quad  \)
        \STATE \( U_{0}^i \leftarrow s_i \)
        \FOR{time = 1, \dots, \( N-1 \)}
            \STATE \( U_{t}^i \leftarrow U_{t-1}^i \setminus {\tau_{t}^i } \)
            \STATE \( \phi (U_t^i) \leftarrow \sum \limits_{a \in U_t^i} \mathop{\min} \limits_{\substack{b \in U_t^i \\ b \neq a}} K_{p}(a, b, s_i) \)
            \STATE 
            $\begin{aligned}C(U_t^i, \tau_{t+1}^i) \leftarrow &K_p(\tau_t^i, \tau_{t+1}^i, s_i) + \frac{\phi (U_{t+1}^i)}{|U_{t+1}^i|} - \frac{\phi (U_{t}^i)}{|U_{t}^i|} 
            \end{aligned} $
            \STATE \( W_{t+1}^i  \leftarrow 1 + \sum \limits_{j=t}^{N} \gamma^{j-t} C(U_j^i, \tau_{j+1}^i) \quad   \)
        \ENDFOR
        \STATE 
        $\begin{aligned} 
         \nabla \hat{\mathcal{L}} \leftarrow \frac{1}{B} \sum_{i=1}^{B} &\left( L(\boldsymbol{\tau}^i) - L(\boldsymbol{\tau}^{i, BL}) \right) \cdot  \left( \sum \limits_{t=2}^{N} W_{t}^i \nabla_\theta \log p_{\theta}(\tau_{t}^i | \tau_{1:t-1}^i, s_i) \right)
        \end{aligned} $
        \STATE \( \theta \leftarrow \theta + \delta
        \nabla \hat{\mathcal{L}} \)
    \ENDFOR

    \STATE \( \theta^{BL} \leftarrow \text{UpdateBaseline}(\theta, \theta^{BL}) \)

\ENDFOR
\end{algorithmic}
\end{algorithm}

\section{Detailed Experimental Result}\label{experimental result}
Table \ref{detailed_train_time} shows the training time per epoch with different methods. We can see that the training time of PUPO does not increase significantly owing to the tensorizable computation.
The experimental results of the specific tour length is presented in Table \ref{detailed_length}.
Table \ref{detailed_time} illustrates the solving time of Vanilla-trained and PUPO-trained models on randomly generated dataset. Following the table, it can be observed that there is almost no
difference between Vanilla and PUPO.
The numerical results of purity metrics for all models are presented in Table \ref{detailed_purity}. Following the table, PUPO-trained models possess the ability to generate solutions with more outstanding purity.

\begin{table}[!ht]
\centering
\caption{The numerical results of training time (min) per epoch during different training.}\label{detailed_train_time}
\vskip 0.15in
    \begin{tabular}{c|cccccc}
    \toprule
        ~ & AM-50 & AM-100 & INViT-50 & INViT-100   \\ \midrule
        Vanilla & 3.05  & 4.49  & 7.72  & 15.75    \\ 
        PUPO & 5.95  & 14.20  & 11.598 & 21.40   \\ \bottomrule
    \end{tabular}
    \vskip -0.1in
\end{table}

\begin{table}[]
\centering
\caption{The length of tours generated from each model after different training on randomly generated dataset.}\label{detailed_length}
\vskip 0.15in
\resizebox{\textwidth}{!}{%
\begin{tabular}{cc|cccc|cccc|cccc}
\toprule
          &       & \multicolumn{2}{c}{AM-50} & \multicolumn{2}{c|}{AM-100} & \multicolumn{2}{c}{PF-50} & \multicolumn{2}{c|}{PF-100} & \multicolumn{2}{c}{INViT-50} & \multicolumn{2}{c}{INViT-100} \\
          \midrule
          &       & Vanilla      & PUPO       & Vanilla       & PUPO       & Vanilla      & PUPO       & Vanilla       & PUPO       & Vanilla        & PUPO        & Vanilla        & PUPO         \\
Uniform   & 100   & 8.28         & 8.29       & 8.27          & 8.23       & 8.19         & 8.21       & 8.15          & 8.13       & 8.03           & 8.04        & 8.07           & 8.05         \\
          & 1000  & 31.47        & 31.12      & 30.47         & 30.22      & 32.74        & 32.30      & 30.32         & 30.17      & 24.91          & 24.88       & 24.94          & 24.67        \\
          & 5000  & 85.75        & 84.28      & 92.20         & 85.97      & 109.00       & 104.18     & 92.18         & 90.62      & 55.85          & 55.69       & 55.84          & 55.15        \\
          & 10000 & -            & -          & -             & -          & -            & -          & -             & -          & 79.07          & 78.88       & 78.79          & 77.87        \\
Clustered & 100   & 5.72         & 5.73       & 5.71          & 5.63       & 5.71         & 5.71       & 5.68          & 5.67       & 5.46           & 5.46        & 5.50           & 5.48         \\
          & 1000  & 19.82        & 19.65      & 19.39         & 18.80      & 21.26        & 21.05      & 20.78         & 20.30      & 15.25          & 15.19       & 15.24          & 15.07        \\
          & 5000  & 53.06        & 52.50      & 52.64         & 50.77      & 70.86        & 69.10      & 70.71         & 66.13      & 32.79          & 32.66       & 32.73          & 32.39        \\
          & 10000 & -            & -          & -             & -          & -            & -          & -             & -          & 44.88          & 44.58       & 44.72          & 44.17        \\
Explosion & 100   & 6.86         & 6.87       & 6.86          & 6.83       & 6.86         & 6.87       & 6.84          & 6.83       & 6.67           & 6.66        & 6.70           & 6.68         \\
          & 1000  & 21.90        & 21.84      & 21.47         & 21.03      & 23.50        & 23.36      & 22.72         & 22.10      & 17.67          & 17.60       & 17.67          & 17.53        \\
          & 5000  & 57.30        & 55.81      & 58.14         & 55.62      & 77.35        & 75.30      & 72.57         & 69.02      & 37.52          & 37.28       & 37.23          & 36.88        \\
          & 10000 & -            & -          & -             & -          & -            & -          & -             & -          & 43.28          & 43.05       & 43.18          & 42.66        \\
Implosion & 100   & 7.48         & 7.49       & 7.48          & 7.45       & 7.45         & 7.47       & 7.43          & 7.42       & 7.29           & 7.29        & 7.32           & 7.30         \\
          & 1000  & 27.61        & 27.57      & 26.63         & 26.33      & 29.39        & 29.04      & 27.28         & 26.95      & 21.64          & 21.63       & 21.69          & 21.51        \\
          & 5000  & 72.59        & 71.24      & 71.28         & 68.46      & 100.18       & 95.14      & 86.94         & 80.27      & 44.96          & 44.83       & 45.05          & 44.78        \\
          & 10000 & -            & -          & -             & -          & -            & -          & -             & -          & 72.24          & 71.79       & 72.17          & 71.18       \\
\bottomrule    
\end{tabular}%
}
\vskip -0.1in
\end{table}

\begin{table}[]
\centering
\caption{The solving time for each model with different training methods on randomly generated dataset.}\label{detailed_time}
\vskip 0.15in
\resizebox{\textwidth}{!}{%
\begin{tabular}{c|cccc|cccc|cccc}
\toprule
                & \multicolumn{2}{c}{AM-50} & \multicolumn{2}{c|}{AM-100} & \multicolumn{2}{c}{PF-50} & \multicolumn{2}{c|}{PF-100} & \multicolumn{2}{c}{INViT-50} & \multicolumn{2}{c}{INViT-100} \\ \midrule
                & Vanilla      & PUPO       & Vanilla       & PUPO       & Vanilla      & PUPO       & Vanilla      & PUPO        & Vanilla       & PUPO         & Vanilla        & PUPO         \\
Uniform-100     & 0.20         & 0.22       & 0.21          & 0.22       & 2.02         & 1.97       & 1.00         & 1.02        & 0.85          & 0.80         & 1.07           & 0.91         \\
Uniform-1000    & 2.35         & 2.34       & 2.41          & 2.40       & 23.67        & 24.08      & 13.57        & 14.18       & 13.24         & 12.88        & 17.83          & 17.34        \\
Uniform-5000    & 19.68        & 19.10      & 20.00         & 19.57      & 246.65       & 246.12     & 205.14       & 204.15      & 91.20         & 89.44        & 120.07         & 119.87       \\
Uniform-10000   & -            & -          & -             & -          & -            & -          & -            & -           & 224.99        & 223.69       & 288.44         & 287.11       \\
Cluster-100     & 0.21         & 0.21       & 0.21          & 0.21       & 2.08         & 2.13       & 0.97         & 0.98        & 0.84          & 0.80         & 1.04           & 1.17         \\
Cluster-1000    & 2.35         & 2.35       & 2.44          & 2.43       & 25.28        & 26.11      & 14.05        & 15.28       & 13.00         & 12.77        & 17.57          & 16.56        \\
Cluster-5000    & 19.16        & 19.38      & 19.99         & 19.35      & 275.68       & 274.49     & 212.36       & 211.68      & 89.48         & 88.23        & 118.63         & 117.31       \\
Cluster-10000   & -            & -          & -             & -          & -            & -          & -            & -           & 222.47        & 222.32       & 288.67         & 288.90       \\
Explosion-100   & 0.21         & 0.21       & 0.20          & 0.22       & 2.18         & 2.17       & 0.97         & 1.09        & 0.82          & 0.83         & 1.05           & 0.94         \\
Explosion-1000  & 2.38         & 2.36       & 2.46          & 2.40       & 25.44        & 24.53      & 13.94        & 13.82       & 12.94         & 13.05        & 17.70          & 17.98        \\
Explosion-5000  & 19.30        & 18.67      & 19.98         & 19.57      & 411.08       & 411.22     & 213.01       & 212.39      & 89.58         & 89.93        & 118.93         & 118.34       \\
Explosion-10000 & -            & -          & -             & -          & -            & -          & -            & -           & 223.24        & 223.03       & 288.86         & 288.32       \\
Implosion-100   & 0.21         & 0.20       & 0.21          & 0.21       & 3.02         & 2.89       & 1.00         & 0.97        & 0.83          & 0.79         & 1.05           & 1.01         \\
Implosion-1000  & 2.37         & 2.45       & 2.45          & 2.45       & 20.84        & 21.01      & 14.08        & 13.99       & 12.91         & 12.73        & 17.60          & 17.31        \\
Implosion-5000  & 19.95        & 19.28      & 19.84         & 19.30      & 445.75       & 444.87     & 186.14       & 186.05      & 89.92         & 88.30        & 118.87         & 118.69       \\
Implosion-10000 & -            & -          & -             & -          & -            & -          & -            & -           & 222.56        & 221.72       & 289.39         & 290.99      \\
\bottomrule    
\end{tabular}%
}
\vskip -0.1in
\end{table}

\begin{table}[]
\centering
\caption{The numerical results of purity evaluation for each model after different training.}\label{detailed_purity}
\vskip 0.15in
\resizebox{\textwidth}{!}{%
\begin{tabular}{c|cccccc|ccccccc}
\toprule
                & \multicolumn{6}{c}{AM-50}                                                     & \multicolumn{6}{c}{AM-100}                                                    &  \\
                & \multicolumn{3}{c}{Vanilla}           & \multicolumn{3}{c}{PUPO}              & \multicolumn{3}{c}{Vanilla}           & \multicolumn{3}{c}{PUPO}              &  \\
                & Prop-0 (\%) & APO (all) & APO (non-0) & Prop-0 (\%) & APO (all) & APO (non-0) & Prop-0 (\%) & APO (all) & APO (non-0) & Prop-0 (\%) & APO (all) & APO (non-0) &  \\ \midrule
Uniform-100     & 75.65\%     & 0.15      & 1.11        & 75.29\%     & 0.15      & 1.11        & 76.81\%     & 0.15      & 1.12        & 79.48\%     & 0.14      & 1.17        &  \\
Uniform-1000    & 62.51\%     & 0.64      & 1.98        & 64.32\%     & 0.62      & 1.92        & 65.67\%     & 0.52      & 1.75        & 67.70\%     & 0.52      & 1.76        &  \\
Uniform-5000    & 24.91\%     & 1.36      & 2.97        & 26.39\%     & 1.33      & 3.05        & 24.54\%     & 1.50      & 3.03        & 26.64\%     & 1.23      & 2.74        &  \\
Uniform-10000   & -           & -         & -           & -           & -         & -           & -           & -         & -           & -           & -         & -           &  \\
Cluster-100     & 70.66\%     & 0.23      & 1.26        & 70.13\%     & 0.24      & 1.28        & 72.34\%     & 0.24      & 1.30        & 75.35\%     & 0.20      & 1.28        &  \\
Cluster-1000    & 59.13\%     & 0.85      & 2.30        & 58.29\%     & 0.80      & 2.19        & 61.18\%     & 0.77      & 2.17        & 62.52\%     & 0.66      & 1.97        &  \\
Cluster-5000    & 23.65\%     & 1.93      & 3.88        & 23\%        & 1.64      & 3.33        & 23.58\%     & 1.64      & 3.31        & 24.76\%     & 1.48      & 3.05        &  \\
Cluster-10000   & -           & -         & -           & -           & -         & -           & -           & -         & -           & -           & -         & -           &  \\
Explosion-100   & 73.72\%     & 0.17      & 1.15        & 74.35\%     & 0.17      & 1.14        & 75.24\%     & 0.17      & 1.17        & 77.77\%     & 0.17      & 1.22        &  \\
Explosion-1000  & 62.92\%     & 0.69      & 2.03        & 61.35\%     & 0.67      & 2.03        & 63.32\%     & 0.62      & 1.89        & 65.58\%     & 0.57      & 1.84        &  \\
Explosion-5000  & 25.09\%     & 1.53      & 3.27        & 23.99\%     & 1.41      & 3.06        & 24.20\%     & 1.55      & 3.13        & 26.04\%     & 1.30      & 2.82        &  \\
Explosion-10000 & -           & -         & -           & -           & -         & -           & -           & -         & -           & -           & -         & -           &  \\
Implosion-100   & 72.97\%     & 0.19      & 1.20        & 73.50\%     & 0.19      & 1.19        & 74.80\%     & 0.19      & 1.23        & 77.27\%     & 0.18      & 1.26        &  \\
Implosion-1000  & 61.52\%     & 0.77      & 2.22        & 60.27\%     & 0.77      & 2.20        & 64.14\%     & 0.65      & 2.01        & 65.01\%     & 0.62      & 1.95        &  \\
Implosion-5000  & 24.05\%     & 1.93      & 3.83        & 23.70\%     & 1.69      & 3.41        & 24.12\%     & 1.65      & 3.33        & 25.30\%     & 1.46      & 3.07        &  \\
Implosion-10000 & -           & -         & -           & -           & -         & -           & -           & -         & -           & -           & -         & -           &  \\ \bottomrule
                & \multicolumn{6}{c}{PF-50}                                                     & \multicolumn{6}{c}{PF-100}                                                    &  \\
                & \multicolumn{3}{c}{Vanilla}           & \multicolumn{3}{c}{PUPO}              & \multicolumn{3}{c}{Vanilla}           & \multicolumn{3}{c}{PUPO}              &  \\
                & Prop-0 (\%) & APO (all) & APO (non-0) & Prop-0 (\%) & APO (all) & APO (non-0) & Prop-0 (\%) & APO (all) & APO (non-0) & Prop-0 (\%) & APO (all) & APO (non-0) &  \\ \midrule
Uniform-100     & 77.46\%     & 0.14      & 1.09        & 77.57\%     & 0.15      & 1.10        & 80.07\%     & 0.12      & 1.08        & 80.31\%     & 0.12      & 1.08        &  \\
Uniform-1000    & 59.68\%     & 0.74      & 2.00        & 60.73\%     & 0.70      & 1.94        & 65.66\%     & 0.56      & 1.78        & 66.06\%     & 0.54      & 1.75        &  \\
Uniform-5000    & 18.66\%     & 2.36      & 3.92        & 20.04\%     & 2.14      & 3.65        & 23.09\%     & 1.56      & 2.94        & 23.60\%     & 1.49      & 2.92        &  \\
Uniform-10000   & -           & -         & -           & -           & -         & -           & -           & -         & -           & -           & -         & -           &  \\
Cluster-100     & 71.41\%     & 0.24      & 1.27        & 71.53\%     & 0.25      & 1.29        & 72.49\%     & 0.24      & 1.29        & 73.64\%     & 0.23      & 1.28        &  \\
Cluster-1000    & 52.39\%     & 1.21      & 2.67        & 52.53\%     & 1.17      & 2.60        & 54.38\%     & 1.09      & 2.54        & 56.51\%     & 0.97      & 2.37        &  \\
Cluster-5000    & 15.08\%     & 4.52      & 6.54        & 15.42\%     & 4.06      & 5.95        & 16.08\%     & 4.20      & 6.31        & 18\%        & 3.13      & 4.93        &  \\
Cluster-10000   & -           & -         & -           & -           & -         & -           & -           & -         & -           & -           & -         & -           &  \\
Explosion-100   & 74.92\%     & 0.19      & 1.16        & 74.82\%     & 0.18      & 1.17        & 76.63\%     & 0.17      & 1.17        & 77.34\%     & 0.16      & 1.16        &  \\
Explosion-1000  & 55.35\%     & 0.98      & 2.33        & 55.95\%     & 0.98      & 2.34        & 57.53\%     & 0.89      & 2.21        & 59.66\%     & 0.81      & 2.12        &  \\
Explosion-5000  & 16.05\%     & 3.52      & 5.18        & 16.64\%     & 3.39      & 5.10        & 17.60\%     & 3.01      & 4.62        & 18.77\%     & 2.73      & 4.40        &  \\
Explosion-10000 & -           & -         & -           & -           & -         & -           & -           & -         & -           & -           & -         & -           &  \\
Implosion-100   & 73.76\%     & 0.24      & 1.30        & 73.68\%     & 0.24      & 1.28        & 75.67\%     & 0.22      & 1.29        & 76.24\%     & 0.22      & 1.29        &  \\
Implosion-1000  & 54.17\%     & 1.40      & 3.09        & 55.04\%     & 1.42      & 3.15        & 59.29\%     & 1.14      & 2.79        & 60.07\%     & 1.13      & 2.81        &  \\
Implosion-5000  & 15.18\%     & 8.14      & 10.81       & 15.90\%     & 8.46      & 11.59       & 17.79\%     & 7.46      & 10.38       & 18.88\%     & 6.02      & 8.88        &  \\
Implosion-10000 & -           & -         & -           & -           & -         & -           & -           & -         & -           & -           & -         & -           &  \\ \bottomrule
                & \multicolumn{6}{c}{INViT-50}                                                  & \multicolumn{6}{c}{INViT-100}                                                 &  \\
                & \multicolumn{3}{c}{Vanilla}           & \multicolumn{3}{c}{PUPO}              & \multicolumn{3}{c}{Vanilla}           & \multicolumn{3}{c}{PUPO}              &  \\
                & Prop-0 (\%) & APO (all) & APO (non-0) & Prop-0 (\%) & APO (all) & APO (non-0) & Prop-0 (\%) & APO (all) & APO (non-0) & Prop-0 (\%) & APO (all) & APO (non-0) &  \\ \midrule
Uniform-100     & 81.52\%     & 0.10      & 1.05        & 0.82        & 0.10      & 1.04        & 80.95\%     & 0.11      & 1.07        & 0.82        & 0.10      & 1.05        &  \\
Uniform-1000    & 86\%        & 0.16      & 1.37        & 0.86        & 0.15      & 1.33        & 85.80\%     & 0.17      & 1.43        & 0.87        & 0.14      & 1.33        &  \\
Uniform-5000    & 43.52\%     & 0.18      & 1.56        & 0.44        & 0.17      & 1.05        & 43.38\%     & 0.20      & 1.61        & 0.44        & 0.17      & 1.48        &  \\
Uniform-10000   & 87.39\%     & 0.18      & 1.55        & 0.88        & 0.17      & 1.50        & 87.32\%     & 0.20      & 1.65        & 0.88        & 0.16      & 1.47        &  \\
Cluster-100     & 79.78\%     & 0.14      & 1.18        & 0.80        & 0.13      & 1.17        & 79.26\%     & 0.16      & 1.26        & 0.80        & 0.14      & 1.19        &  \\
Cluster-1000    & 86.02\%     & 0.22      & 1.88        & 0.86        & 0.19      & 1.63        & 85.56\%     & 0.23      & 1.83        & 0.87        & 0.18      & 1.61        &  \\
Cluster-5000    & 43.72\%     & 0.20      & 1.61        & 0.44        & 0.18      & 1.55        & 43.32\%     & 0.27      & 2.20        & 0.44        & 0.17      & 1.52        &  \\
Cluster-10000   & 87.40\%     & 0.84      & 6.91        & 0.88        & 0.88      & 7.34        & 87.15\%     & 0.71      & 5.74        & 0.88        & 0.87      & 7.77        &  \\
Explosion-100   & 80.56\%     & 0.12      & 1.11        & 0.81        & 0.12      & 1.11        & 80.09\%     & 0.13      & 1.15        & 0.81        & 0.12      & 1.11        &  \\
Explosion-1000  & 85.42\%     & 0.30      & 2.34        & 0.86        & 0.28      & 2.23        & 85.09\%     & 0.30      & 2.33        & 0.86        & 0.26      & 2.16        &  \\
Explosion-5000  & 43.54\%     & 0.42      & 3.47        & 0.44        & 0.48      & 4.07        & 43.40\%     & 0.53      & 4.33        & 0.44        & 0.43      & 3.73        &  \\
Explosion-10000 & 87\%        & 0.95      & 7.68        & 0.87        & 1.03      & 8.38        & 87\%        & 0.97      & 7.58        & 0.88        & 0.88      & 7.57        &  \\
Implosion-100   & 79.98\%     & 0.12      & 1.12        & 0.80        & 0.12      & 1.12        & 79.77\%     & 0.14      & 1.17        & 0.80        & 0.13      & 1.13        &  \\
Implosion-1000  & 85.69\%     & 0.19      & 1.54        & 0.86        & 0.18      & 1.54        & 85.39\%     & 0.21      & 1.64        & 0.86        & 0.18      & 1.55        &  \\
Implosion-5000  & 43.39\%     & 0.23      & 1.88        & 0.44        & 0.22      & 1.81        & 43.33\%     & 0.24      & 1.92        & 0.44        & 0.19      & 1.66        &  \\
Implosion-10000 & 87.34\%     & 0.30      & 2.47        & 0.88        & 0.29      & 2.39        & 87.16\%     & 0.27      & 2.18        & 0.88        & 0.30      & 2.62        & \\
\bottomrule 
\end{tabular}%
}
\vskip -0.1in
\end{table}


\end{document}